\documentclass{article}

\usepackage[final]{neurips_2025}

\usepackage[utf8]{inputenc} %
\usepackage[T1]{fontenc}    %
\usepackage[%
  colorlinks = true,
  citecolor  = Maroon,
  linkcolor  = MidnightBlue,
  urlcolor   = Maroon,
  unicode,
  linktocpage,
]{hyperref}       %
\usepackage{url}            %
\usepackage{booktabs}       %
\usepackage{amsfonts}       %
\usepackage{nicefrac}       %
\usepackage{microtype}      %
\usepackage[dvipsnames]{xcolor}         %

\usepackage{amsmath,amsfonts,bm}

\DeclareMathOperator{\vectorized}{vec}
\DeclareMathOperator{\mat}{mat}

\DeclareMathOperator{\diag}{diag}

\usepackage{amssymb}
\makeatletter
\newcommand*{\transpose}{\bgroup\@ifstar{\mathpalette\@transpose{\mkern-3.5mu}\egroup}{\mathpalette\@transpose\relax\egroup}}
\newcommand*{\@transpose}[2]{\setbox0=\hbox{\m@th$#1#2\intercal$}\raise\dp0\box0}
\makeatother

\def\eqref#1{equation~\ref{#1}}

\def\1{\bm{1}}

\def\vzero{{\bm{0}}}
\def\vone{{\bm{1}}}

\def\vtheta{{\bm{\theta}}}

\def\ve{{\bm{e}}}

\def\vg{{\bm{g}}}

\def\vu{{\bm{u}}}
\def\vv{{\bm{v}}}

\def\vx{{\bm{x}}}
\def\vy{{\bm{y}}}

\def\mA{{\bm{A}}}

\def\mC{{\bm{C}}}
\def\mD{{\bm{D}}}
\def\mE{{\bm{E}}}
\def\mF{{\bm{F}}}
\def\mG{{\bm{G}}}

\def\mI{{\bm{I}}}

\def\mL{{\bm{L}}}
\def\mM{{\bm{M}}}

\def\mQ{{\bm{Q}}}
\def\mR{{\bm{R}}}

\def\mU{{\bm{U}}}

\def\mW{{\bm{W}}}
\def\mX{{\bm{X}}}

\def\mLambda{{\bm{\Lambda}}}

\DeclareMathAlphabet{\mathsfit}{\encodingdefault}{\sfdefault}{m}{sl}
\SetMathAlphabet{\mathsfit}{bold}{\encodingdefault}{\sfdefault}{bx}{n}

\def\gD{{\mathcal{D}}}

\def\gL{{\mathcal{L}}}

\newcommand{\E}{\mathbb{E}}

\newcommand{\R}{\mathbb{R}}

\DeclareMathOperator*{\argmin}{arg\,min}

\DeclareMathOperator{\Tr}{Tr}

\usepackage[most]{tcolorbox}
\usepackage{graphicx}
\usepackage{subcaption}
\usepackage{cleveref}
\usepackage{makecell}
\usepackage{xspace}
\usepackage{algorithm}
\usepackage{algpseudocode}
\usepackage{multirow}
\usepackage{tablefootnote}  %
\usepackage{authblk}

\definecolor{tue1}{rgb}{0.55294118, 0.17647059, 0.22352941}
\definecolor{tue2}{rgb}{0.21568627, 0.25490196, 0.29019608}
\definecolor{tue3}{rgb}{0.68235294, 0.62352941, 0.42745098}
\definecolor{tue4}{rgb}{0.        , 0.41176471, 0.66666667}
\definecolor{tue5}{rgb}{0.68627451, 0.70196078, 0.71764706}
\definecolor{tue6}{rgb}{0.49019608, 0.64705882, 0.29411765}
\definecolor{tue7}{rgb}{0.78431373, 0.31372549, 0.23529412}
\definecolor{tue8}{rgb}{0.68627451, 0.43137255, 0.58823529}
\definecolor{tue9}{rgb}{0.70588235, 0.62745098, 0.58823529}
\definecolor{tue10}{rgb}{0.56862745, 0.41176471, 0.2745098}

\usepackage{amsthm}
\newtheorem{proposition}{Proposition}
\newtheorem{corollary}{Corollary}
\newtheorem{lemma}{Lemma}

\Crefname{claim}{Claim}{Claims}

\newcommand*{\ie}{i.e.\@\xspace}
\newcommand*{\eg}{e.g.\@\xspace}
\newcommand*{\cf}{c.f.\@\xspace}

\usepackage{pifont}
\newcommand{\cmark}{\ding{51}}
\newcommand{\xmark}{\ding{55}}

\newcommand\blfootnote[1]{%
  \begingroup
  \renewcommand\thefootnote{}\footnote{#1}%
  \addtocounter{footnote}{-1}%
  \endgroup
}

\title{Purifying Shampoo: Investigating Shampoo's Heuristics by Decomposing its Preconditioner}

\author[1]{Runa Eschenhagen\thanks{Work performed while an intern and external research collaborator at FAIR, Meta Platforms.}$^{*,}$}
\author[2]{Aaron Defazio}
\author[ ]{Tsung-Hsien Lee\thanks{Work performed while employed by AI and Systems Co-Design, Meta Platforms.}$^{\dagger}$}
\author[1,3]{\authorcr Richard E. Turner}
\author[4]{Hao-Jun Michael Shi}

\affil[1]{Department of Engineering, University of Cambridge}
\affil[2]{Fundamental AI Research, Meta Superintelligence Labs, Meta Platforms, Inc.}
\affil[3]{The Alan Turing Institute}
\affil[4]{Infrastructure Optimizations, Meta Superintelligence Labs, Meta Platforms, Inc.}

\begin{document}

\blfootnote{Correspondence to: \texttt{re393@cam.ac.uk} and \texttt{hjmshi@meta.com}.}

\maketitle

\begin{abstract}
    The recent success of Shampoo in the AlgoPerf contest has sparked renewed interest in Kronecker-factorization-based optimization algorithms for training neural networks.
    Despite its success, Shampoo relies heavily on several heuristics such as learning rate grafting and stale preconditioning to achieve performance at-scale.
    These heuristics increase algorithmic complexity, necessitate further hyperparameter tuning, and lack theoretical justification.
    This paper investigates these heuristics from the angle of Frobenius norm approximation to full-matrix Adam and decouples the preconditioner's eigenvalues and eigenbasis updates.
    We show that grafting from Adam mitigates the staleness and mis-scaling of the preconditioner's \emph{eigenvalues} and how correcting the eigenvalues directly eliminates the need for learning rate grafting.
    To manage the error induced by infrequent \emph{eigenbasis} computations, we propose an adaptive criterion for determining the eigenbasis computation frequency motivated by terminating a warm-started QR algorithm.
    This criterion decouples the update frequency of different preconditioner matrices and enables us to investigate the impact of approximation error on convergence.
    These practical techniques offer a principled angle towards removing Shampoo's heuristics and developing improved Kronecker-factorization-based training algorithms.
\end{abstract}

\setcounter{footnote}{0}  %

\section{Introduction}
\label{sec:introduction}
Structured non-diagonal, and especially Kronecker-factored, preconditioned stochastic gradient algorithms have been extensively studied for neural network training \citep{heskes2000natural,martens2010deep,martens2015optimizing,li2018preconditioned,gupta2018shampoo}.
Despite their promise, diagonally preconditioned methods like Adam have remained the de facto methods for training neural networks over the past decade \citep{duchi2011adaptive,kingma2014adam}.
Recently, a distributed implementation of the Shampoo algorithm \citep{gupta2018shampoo,anil2020scalable,shi2023distributed} won the external tuning track of the AlgoPerf neural network training algorithm competition \citep{dahl2023benchmarking,kasimbeg2025accelerating}.\footnote{The \textit{external tuning} track requires a submission to specify a hyperparameter search space, in contrast to the hyperparameter-tuning-free \textit{self-tuning} track.}
This result has renewed interest in non-diagonally preconditioned training algorithms, inspiring methods like Muon \citep{jordan2024muon,bernstein2025deriving}
and SOAP \citep{vyas2025soap}.

However, the winning Shampoo submission to the AlgoPerf competition relied on several crucial heuristics beyond Shampoo \citep{anil2020scalable,shi2023distributed}.
Most notably, each layer's update is re-scaled by the update magnitude of a reference optimizer, a technique known as \textit{learning rate grafting} \citep{agarwal2020disentangling}; see \Cref{fig:stale-instability} for an illustration of its importance. 
Additionally, to reduce its computational overhead, the root-inverse of the preconditioner is only re-computed every $100$ steps, resulting in the use of a \textit{stale} preconditioner.
Despite their empirical effectiveness, both heuristics lack theoretical justification and remain poorly understood.

In this paper, we investigate the role of these two heuristics by decoupling the updates of the preconditioner's eigenvalues and eigenbasis. 
In \Cref{sec:grafting}, we empirically demonstrate that Shampoo requires learning rate grafting in order to address the staleness and mis-scaling of the preconditioner's \textit{eigenvalues}. Correcting Shampoo's eigenvalues at every step like in SOAP \citep{vyas2025soap} removes the need for grafting.
We further formalize this intuition by comparing bounds on the update magnitude of Shampoo and full-matrix Adam.

Given the importance of controlling the approximation error of the eigenvalues, we next consider the frequency of the \textit{eigenbasis} updates in \Cref{sec:adaptive-eigenbases}.
Motivated by a termination criterion for the QR algorithm that bounds the relative error of the Kronecker factor approximation induced by the current eigenbasis \citep{golub2013matrix}, we propose an adaptive method for determining the eigenbasis update frequency.
Our empirical results show that the approximation error of the Kronecker factors evolves during training and impacts convergence, depending on both the training stage and parameter's properties.
Using an adaptive update frequency can improve Shampoo's training efficiency, especially when more frequent eigenbasis computations accelerate convergence.

\begin{figure}[t]
    \centering
    \includegraphics[width=\textwidth]{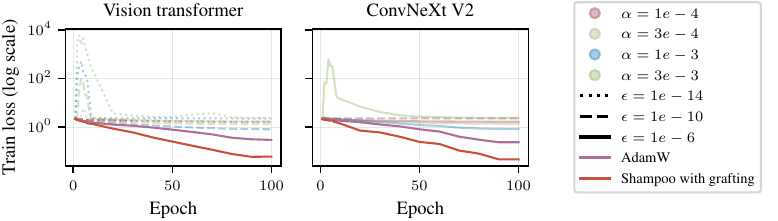}
    \caption{Shampoo with stale preconditioner (updating the root inverse matrices every $F=100$ steps) without grafting for different choices of the learning rate $\alpha$ and $\epsilon$ on Imagewoof. All tested hyperparameter combinations underperform AdamW and, by extension, Shampoo with Adam grafting.}
    \label{fig:stale-instability}
\end{figure}

\section{Background}
\label{sec:background}
We consider neural network training as a standard stochastic optimization problem, where the goal is to minimize the expected loss function
\begin{equation}
\min_{\vtheta \in \R^d} \gL(\vtheta) = \E_{(\vx, \vy) \sim p_\gD(\vx, \vy)} \big[ \ell(f_\vtheta(\vx), \vy)  \big],
\end{equation}
where $f_\vtheta: \R^N \rightarrow{\R^c}$ is the neural network prediction function with parameters $\vtheta \in \R^d$ (flattened and concatenated into a vector), $p_\gD(\vx, \vy)$ is the joint data distribution from which inputs $\vx \in \R^N$ and targets $\vy \in \R^c$ are sampled, and $\ell: \R^c \times \R^c \rightarrow{\R}$ is the loss function. The neural network is commonly trained using preconditioned stochastic gradient methods that update the parameters at each iteration $t$ by 
\begin{equation}
\label{eq:PGD}
    \vtheta_{t+1} = \vtheta_{t} - \alpha_t \mC_t^{-p} \vg_t = \vtheta_{t} - \alpha_t \mQ_{\mC_t} \mLambda_{\mC_t}^{-p} \mQ_{\mC_t}^\transpose \vg_t,
\end{equation}
where $\vg_t = \nabla_{\vtheta_t} \ell(f_{\vtheta_t}(\vx_t), \vy_t) \in \R^d$ is the stochastic (mini-batch) gradient with respect to the sample $(\vx_t, \vy_t) \sim p_\gD(\vx, \vy)$, $\alpha_t > 0$ is the step size, $p > 0$ is the exponent, and $\mC_t \in \R^{d \times d}$ is a symmetric positive-definite \textit{preconditioner} or \textit{scaling} matrix.
The second equality in \Cref{eq:PGD} expresses the update using the eigendecomposition of the preconditioner matrix $\mC_t = \mQ_{\mC_t} \mLambda_{\mC_t} \mQ_{\mC_t}^\transpose$, where the eigenbasis matrix $\mQ_t \in \R^{d \times d}$ is orthogonal and eigenvalue matrix $\mLambda_{\mC_t} \in \R^{d \times d}$ is diagonal.

We primarily focus on $\mC_t$ as an accumulation of gradient outer products, i.e., $\bar{\mA}_{t} = \sum_{s = 1}^t \vg_s \vg_s^\transpose$  or $\mA_{t} = \beta_2 \mA_{t-1} + (1 - \beta_2) \vg_t \vg_t^\transpose$ with $p = 1/2$, which correspond to full-matrix AdaGrad and RMSprop/Adam, respectively \citep{duchi2011adaptive,kingma2014adam}, although
this can be generalized to other alternatives, like the natural gradient method \citep{amari1998natural}; see \Cref{sec:fisher-connection}.
When $\mC_t$ is diagonal, this scaling recovers simplified versions of popular algorithms like AdaGrad \citep{duchi2011adaptive} or Adam \citep{kingma2014adam}, ignoring additional modifications like exponential moving averages, momentum, bias corrections, etc. However, for non-diagonal $\mC_t$, these methods require storing and possibly inverting dense $d \times d$ matrices, which is computationally prohibitive.

To address this, two structured approximations are commonly applied: (1) a layer-wise block-diagonal approximation, where each block captures pairwise correlations within each layer; (2) a Kronecker-factored approximation that takes a $m n \times mn$ matrix block for each layer and approximates it as a Kronecker product of two smaller matrices of size $m \times m$ and $n \times n$. The Kronecker product approximation is particularly convenient for computing the inverse-root matrix-vector product, leading to the design of algorithms like Kronecker-Factored Approximate Curvature (K-FAC) \citep{heskes2000natural,martens2015optimizing,grosse2016kronecker,martens2018kroneckerfactored,eschenhagen2023kfac}, Shampoo \citep{gupta2018shampoo,anil2020scalable,shi2023distributed}, and their variants \citep{george2018ekfac,gao2020trace,ren2021tensor,feinberg2023sketchy,duvvuri2024caspr,lin2024structured,lin2024remove}. Alternative approaches, such as Hessian-free or inexact Newton methods, avoid explicitly storing the preconditioner matrix, \eg, by leveraging Hessian-vector products \citep{martens2010deep,li2018preconditioned,pooladzandi2024curvature}.

\subsection{Shampoo}
\label{sec:shampoo}

The Shampoo preconditioner was originally developed as a Kronecker-factorized upper bound to full-matrix AdaGrad in its regret analysis \citep{gupta2018shampoo}. 
For a weight matrix $\mW_t \in \R^{m \times n}$ with stochastic gradient $\mG_t \in \R^{m \times n}$ where $\vg_t = \vectorized(\mG_t)$, Shampoo stores symmetric positive semi-definite matrices that approximate an \textit{idealized} preconditioner with $\mL_t \approx \E[\mG_t \mG_t^\transpose] \in \R^{m \times m}$ and $\mR_t \approx \E[\mG_t^\transpose \mG_t] \in \R^{n \times n}$, and updates the preconditioner and weight matrix by 
\begin{align}
\label{eq:shampoo}
    \mL_{t} & = \beta_2 \mL_{t-1} + (1 - \beta_2) \mG_t \mG_t^\transpose, \\
    \mR_{t} & = \beta_2 \mR_{t-1} + (1 - \beta_2) \mG_t^\transpose \mG_t, \\
    \mW_{t+1} & = \mW_{t} - \alpha_t \mL_t^{-\frac{1}{4}} \mG_t \mR_t^{-\frac{1}{4}}
\end{align}
given $\beta_2 \in [0, 1)$.\footnote{We drop the subscript in the expectation, taken with respect to $(\vx_t, \vy_t) \sim p_\gD(\vx, \vy)$.}\footnote{A pseudo-inverse that ignores the null space of the matrix or perturbing the matrix by a regularization term $\epsilon \mI$ for $\epsilon > 0$ can be used to handle the symmetric positive semi-definite case.} The original Shampoo algorithm uses a sum to accumulate the gradient outer products, although the exponential moving average is more commonly used in practice. Therefore, we are interested in approximating the preconditioner of full-matrix Adam, given by $\mA_t = \beta_2 \mA_{t - 1} + (1 - \beta_2) \vg_t \vg_t^\transpose \approx \E[\vg_t \vg_t^\transpose]$. 

The search direction or update in matrix form is given as $\mU_t^\mathrm{Shampoo} = -\mL_t^{-\frac{1}{4}} \mG_t \mR^{-\frac{1}{4}}$ or, equivalently, $\mC_t^\mathrm{Shampoo} = (\mR_t \otimes \mL_t)^{\frac{1}{2}}$ with $p=1/2$ in \Cref{eq:PGD}.
We also consider \textit{Shampoo$^2$} defined as $\mC_t^\mathrm{Shampoo^2} = \mR_t \otimes \mL_t$, which is a tighter approximation to full-matrix AdaGrad \citep{morwani2025new}.
Both updates can be generalized to higher-order tensors.

\subsection{Eigenvalue-corrected Shampoo and SOAP}
\label{sec:eigenvalue-corrected}

By leveraging a Kronecker product approximation, Shampoo's preconditioner is restricted to a Kronecker product structure for both its eigenvectors and eigenvalues. Specifically, if we have a preconditioner $\mC_t = \mR_t \otimes \mL_t$ with eigendecompositions $\mL_t = \mQ_{\mL_t} \mLambda_{\mL_t} \mQ_{\mL_t}^\transpose$ and $\mR_t = \mQ_{\mR_t} \mLambda_{\mR_t} \mQ_{\mR_t}^\transpose$, then $\mC_t$ has eigendecomposition $\mC_t = (\mQ_{\mR_t} \otimes \mQ_{\mL_t}) (\mLambda_{\mR_t} \otimes \mLambda_{\mL_t}) (\mQ_{\mR_t} \otimes \mQ_{\mL_t})^\transpose$. Preconditioning $-\vg_t$ with $\mC_t^{\textrm{Shampoo}}$ and $p=1/2$ in its matrix form is equivalent to the matrix transformation:
\begin{equation}
    \label{eq:kronecker-factored-update}
    \mU_t^\mathrm{Shampoo} = -\mL_t^{-\frac{1}{4}} \mG_t \mR_t^{-\frac{1}{4}} = -\mQ_{\mL_t} \mLambda_{\mL_t}^{-\frac{1}{4}} (\mQ_{\mL_t}^\transpose \mG_t \mQ_{\mR_t}) \mLambda_{\mR_t}^{-\frac{1}{4}} \mQ_{\mR_t}^\transpose.
\end{equation}
This can be interpreted as applying a Kronecker-factored coordinate-wise scaling to a gradient with changed basis, \ie, $\tilde{\mG}_t = \mQ_{\mL_t}^\transpose \mG_t \mQ_{\mR_t}$, scaling the transformed gradient, and converting it back to its original basis. 

Instead of restricting the eigenvalues to be a Kronecker product, \citet{liu2018rotate} and \citet{george2018ekfac} have proposed \textit{correcting} the eigenvalues by decoupling the scaling from the basis in K-FAC. This involves computing a separate scaling matrix $\mD_t \approx \E\big[\tilde{\mG}_t^{\odot 2} \big] \in \R^{m \times n}$ and using it in place of the preconditioner's original eigenvalues $\mat\diag(\mLambda_{\mR_t} \otimes \mLambda_{\mL_t})$.\footnote{$\mX^{\odot 2}$ denotes the element-wise square.}\footnote{$\mat \diag(\cdot)$ takes a $mn \times mn$ matrix and reshapes its diagonal into a $m \times n$ matrix.}

\citet{anil2020scalable} noted that this correction can also be applied to Shampoo.
Most recently, \citet{vyas2025soap} presented promising empirical results for language models using an instance of eigenvalue-corrected Shampoo called SOAP, which updates the scaling by $\mD_t = \beta_2 \mD_{t - 1} + (1 - \beta_2) \tilde{\mG}_t^{\odot 2}$.
Since then, SOAP has also been shown to perform well for physics-informed neural networks \citep{wang2025gradient} and to reduce outlier features in transformers, which potentially improves quantization \citep{he2024understanding}.
We refer to our practical instantiation of eigenvalue-corrected Shampoo as \emph{EShampoo} (\Cref{app:algorithms-pseudocode}, \Cref{alg:eshampoo}), which uses the same $\mD_t$ as SOAP. See \Cref{sec:optimal} for clarification on the distinction between eigenvalue correction, EShampoo, and SOAP.

\section{Learning rate grafting compensates for Shampoo's eigenvalues}
\label{sec:grafting}

Originally motivated by decoupling an optimizer's update magnitude from its direction to account for different implicit learning rate schedules, \citet{agarwal2020disentangling} proposed \textit{learning rate grafting}, a technique that combines the layer-wise update of one optimizer with the layer-wise update magnitude of another. For Shampoo, this means taking Shampoo's layer-wise update $\mU_t^\mathrm{Shampoo}$ and rescaling it by the Frobenius norm of the grafting method's update $\mU_t^\mathrm{Grafting}$ (typically Adam):
\begin{equation}
\label{eq:grafting}
    \mW_{t  + 1} = \mW_t + \alpha_t \frac{||\mU_t^\mathrm{Grafting}||_F}{||\mU_t^\mathrm{Shampoo}||_F} \; \mU_t^\mathrm{Shampoo}.
\end{equation}
A complete description of Shampoo with Adam grafting is given in \Cref{app:algorithms-pseudocode}, \Cref{alg:shampoo-adam-grafting}.

This approach has been critical to Shampoo's empirical success \citep{anil2020scalable,shi2023distributed,kasimbeg2025accelerating}.
As shown in \Cref{fig:stale-instability}, Shampoo without grafting (and updating the root inverse matrices every $F = 100$ steps) underperforms AdamW, with many hyperparameter settings diverging.
\citet{anil2020scalable} suggests that grafting is used to account for differences in magnitude of the eigenspectrum of the Kronecker factors for different layers, as well as the infrequent updates of the Kronecker factors and their inverse roots (see \citet{anil2020scalable}, Appendix G).
However, these claims have not been thoroughly investigated.

To focus our empirical investigation, we study Shampoo with Adam grafting and $F=100$, which was the winning configuration that was used in the external tuning track of the AlgoPerf competition \citep{kasimbeg2025accelerating}.
Since grafting re-scales the layer-wise update based on its magnitude, we analyze the Frobenius norm of the updates of full-matrix and diagonal Adam, Shampoo, and EShampoo.
The magnitude of the eigendecomposed update in \Cref{eq:PGD} with Kronecker-factored $\mQ_{\mC_t} = \mQ_{\mR_t} \otimes \mQ_{\mL_t}$ is determined by the norm of the stochastic gradient $\mG_t$ and the eigenvalues of $\mC_t$:
\begin{lemma}
    \label{lem:bound-kronecker-update}
    Let $\mU = \mQ_{\mL} (\mD^{\odot-p} \odot (\mQ_{\mL}^\transpose \mG \mQ_{\mR})) \mQ_{\mR}^\transpose \in \R^{m \times n}$ be the generalized eigendecomposed Kronecker-factored update given by orthogonal matrices $\mQ_{\mL} \in \R^{m \times m}$, $\mQ_{\mR} \in \R^{n \times n}$, and dense scaling matrix $\mD \in \R^{m \times n}$, with $p > 0$. Then we have:
    \begin{equation}
        (\max_{i, j} \mD_{i, j})^{-p} || \mG ||_F \leq || \mU ||_F \leq (\min_{i, j} \mD_{i, j})^{-p} || \mG ||_F.
    \end{equation}
\end{lemma}
\Cref{lem:bound-kronecker-update} covers multiple algorithms. We can recover the idealized Adam update by setting $\mQ_{\mL} = \mI \in \R^{m \times m}$, $\mQ_{\mR} = \mI \in \R^{n \times n}$, and $\mD = \E\big[\mG^{\odot 2}]$ with $p = 1/2$. The idealized Shampoo update can be recovered through the choice of $\mD = \mat\diag(\mLambda_\mR \otimes \mLambda_\mL )$ with $p=1/4, \mL = \E[\mG \mG^\transpose] = \mQ_{\mL} \mLambda_{\mL} \mQ_{\mL}^\transpose$ and $\mR = \E[\mG^\transpose \mG] = \mQ_{\mR} \mLambda_{\mR} \mQ_{\mR}^\transpose$ (or $p = 1/2$ for Shampoo$^2$). Idealized EShampoo is recovered with the choice of $\mD = \E\big[(\mQ_L^\transpose \mG \mQ_R)^{\odot 2}\big]$ and $p=1/2$ instead.
\begin{proposition}
    \label{prop:magnitude}
    Assume that $\E[\vg\vg^\transpose]$ is symmetric positive definite. The magnitude of the idealized updates for full-matrix Adam, diagonal Adam, and EShampoo are all bounded by the power of the extreme eigenvalues of full-matrix Adam: 
    \begin{equation}
        \label{eq:update-magnitude-bound}
        \lambda_{\max}(\E[\vg \vg^\transpose])^{-p} \|\mG\|_F \leq \|\mU\|_F \leq \lambda_{\min}(\E[\vg \vg^\transpose])^{-p} \|\mG\|_F,
    \end{equation}
    for all $p > 0$. However, under the simplifying assumption that $\E[\mG] = \vzero$ and $\mG_{i, j}$ is independent from $\mG_{k, l}$ for $(i, j) \neq (k, l)$ and has bounded second moment, $\lambda_{\min}(\E [\vg \vg^\transpose]) \leq \E [\mG_{i, j}^2] \leq \lambda_{\max}(\E [\vg \vg^\transpose])$ and Shampoo has dimension-dependent bounds:
    \begin{equation}
        \label{eq:dimension-dependent-magnitude}
        m^{-p/2} n^{-p/2} \lambda_{\max}(\E[\vg \vg^\transpose])^{-p} \|\mG\|_F \leq \|\mU\|_F \leq m^{-p/2} n^{-p/2} \lambda_{\min}(\E[\vg \vg^\transpose])^{-p} \|\mG\|_F.
    \end{equation}
\end{proposition}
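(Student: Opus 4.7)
The plan is to apply \Cref{lem:bound-kronecker-update} to each case after identifying the appropriate $\mD$, $\mQ_{\mL}$, $\mQ_{\mR}$, and then to bound the entries of $\mD$ by the extreme eigenvalues of $\E[\vg\vg^\transpose]$. Full-matrix Adam falls outside the Kronecker-factored form of the lemma (its eigenbasis is not generally Kronecker), so I would handle it directly: writing $\E[\vg\vg^\transpose] = \mQ \mLambda \mQ^\transpose$, setting $\tilde{\vg} = \mQ^\transpose \vg$, and using $\|\tilde{\vg}\| = \|\vg\| = \|\mG\|_F$ to get $\|\mU\|_F^2 = \sum_i \lambda_i^{-2p} \tilde{g}_i^2$, which is immediately squeezed between $\lambda_{\max}^{-2p}\|\mG\|_F^2$ and $\lambda_{\min}^{-2p}\|\mG\|_F^2$.

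For diagonal Adam I would take $\mQ_{\mL} = \mI_m$, $\mQ_{\mR} = \mI_n$, and $\mD_{i,j} = \E[\mG_{i,j}^2]$; these are diagonal entries of $\E[\vg\vg^\transpose]$, and for any symmetric \psd matrix $\mM$ one has $\lambda_{\min}(\mM) \leq \ve_k^\transpose \mM \ve_k \leq \lambda_{\max}(\mM)$, so $\mD_{i,j} \in [\lambda_{\min}, \lambda_{\max}]$ and \Cref{lem:bound-kronecker-update} closes the case. For EShampoo the scaling is $\mD_{i,j} = \E[(\mQ_{\mL}^\transpose \mG \mQ_{\mR})_{i,j}^2]$; writing this entry as $\vu_i^\transpose \mG \vv_j$ for the $i$-th and $j$-th columns of $\mQ_{\mL}$ and $\mQ_{\mR}$ and using the vectorization identity gives $\vu_i^\transpose \mG \vv_j = (\vv_j \otimes \vu_i)^\transpose \vg$, so
\begin{equation}
    \mD_{i,j} = (\vv_j \otimes \vu_i)^\transpose \E[\vg\vg^\transpose] (\vv_j \otimes \vu_i).
\end{equation}
Since $\|\vv_j \otimes \vu_i\| = \|\vv_j\|\|\vu_i\| = 1$, this Rayleigh quotient lies in $[\lambda_{\min}, \lambda_{\max}]$, and the lemma again yields the claimed bound.

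For Shampoo, the corresponding scaling in the form of \Cref{lem:bound-kronecker-update} is $\mD_{i,j} = (\lambda_i(\mL)\lambda_j(\mR))^{1/2}$ (so that $\mD^{\odot -p}$ matches the exponents of $\mL^{-p/2} \mG \mR^{-p/2}$). Under the simplifying assumption, $\E[\vg\vg^\transpose]$ is diagonal with entries $\sigma_{i,j}^2 := \E[\mG_{i,j}^2]$; hence the eigenvalues of $\E[\vg\vg^\transpose]$ are precisely the $\sigma_{i,j}^2$, which establishes the sandwiching $\lambda_{\min} \leq \sigma_{i,j}^2 \leq \lambda_{\max}$. Using zero mean and independence I would compute $\mL_{i,i'} = \sum_k \E[\mG_{i,k}\mG_{i',k}] = \delta_{i,i'}\sum_k \sigma_{i,k}^2$ and similarly for $\mR$, so both Kronecker factors are diagonal with eigenvalues in $[n\lambda_{\min}, n\lambda_{\max}]$ and $[m\lambda_{\min}, m\lambda_{\max}]$ respectively. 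Therefore $\mD_{i,j} \in [\sqrt{mn}\,\lambda_{\min}, \sqrt{mn}\,\lambda_{\max}]$, and one more application of \Cref{lem:bound-kronecker-update} produces the dimension-dependent factor $m^{-p/2}n^{-p/2}$.

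The routine algebra (vectorization identity, diagonal-entry bound for \psd matrices) is the easy part; the main subtlety is getting the Shampoo scaling convention right, i.e.\@ that the relevant $\mD$ carries a square root of $\lambda_i(\mL)\lambda_j(\mR)$, which is what turns the naive bound $mn\lambda^2$ into $\sqrt{mn}\,\lambda$ and ultimately yields the $m^{-p/2}n^{-p/2}$ scaling stated in the proposition. Everything else flows mechanically from \Cref{lem:bound-kronecker-update} once this identification is fixed.
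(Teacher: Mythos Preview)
Your proof is correct and essentially mirrors the paper's own argument: handle full-matrix Adam directly via the eigendecomposition of $\E[\vg\vg^\transpose]$, express the diagonal Adam and EShampoo scalings as Rayleigh quotients of $\E[\vg\vg^\transpose]$ along unit vectors (the paper writes these as $\ve_{i,j}$ and $(\mQ_{\mR}\otimes\mQ_{\mL})\ve_{i,j}$, equivalent to your $\vv_j\otimes\vu_i$), and for Shampoo under independence compute the diagonal entries of $\mL$ and $\mR$ to get eigenvalues in $[n\lambda_{\min},n\lambda_{\max}]$ and $[m\lambda_{\min},m\lambda_{\max}]$ before applying \Cref{lem:bound-kronecker-update}. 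The only cosmetic difference is that the paper phrases the Shampoo step in terms of the eigenvalues of $(\mL\otimes\mR)^{1/2}$ rather than your $\mD_{i,j}=(\lambda_i(\mL)\lambda_j(\mR))^{1/2}$, but these are the same computation.
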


See \Cref{app:proofs} for the proofs of \Cref{lem:bound-kronecker-update} and \Cref{prop:magnitude}.

This highlights a key issue: Shampoo's update magnitude can be mis-scaled relative to Adam and EShampoo, especially due to dimension-dependent factors. The basis does not influence the update magnitude -- only the eigenvalues do. While this additional scaling can be absorbed into the learning rate when only handling a single matrix, it is potentially problematic when one needs to handle multiple parameters simultaneously. In addition, the use of stale eigenvalues can result in update magnitudes that lie outside of the bounds of full-matrix Adam.
This leads us to the following hypothesis:
\begin{tcolorbox}[colframe=tue2, colback=tue2!3, title={The role of learning rate grafting in Shampoo},bottom=0mm,top=0mm,middle=0mm]
Learning rate grafting compensates for the scaling and staleness of Shampoo's eigenvalues.
\end{tcolorbox}
From the Frobenius norm approximation perspective, using Shampoo$^2$ (via $\mC_t^{\textrm{Shampoo}^2}$) yields a tighter approximation to full-matrix Adam compared to $\mC_t^\textrm{Shampoo}$ \citep{morwani2025new}, which addresses the mismatch of the eigenvalues' exponent in \Cref{eq:dimension-dependent-magnitude}.
Additionally, rescaling the preconditioner by $S^{-1} = \Tr(\mR_t)^{-1} = \Tr(\mL_t)^{-1}$ ensures exactness when full-matrix Adam $\mA_t$ is a Kronecker product \citep{morwani2025new}. 
The scaling $S^{-1}$ has also been previously introduced for Tensor Normal Training \citep[TNT]{ren2021tensor} to approximate the Fisher information matrix.

\begin{figure}
    \centering
    \includegraphics[width=\textwidth]{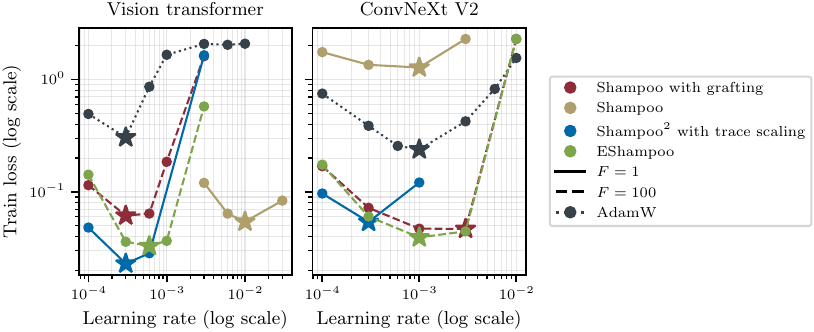}
    \caption{Training results with different Shampoo variants and eigendecomposition frequencies $F$ on the Imagewoof dataset. Shampoo with eigenvalue correction achieves a better training loss compared to Shampoo with Adam grafting, and the optimal learning rate for Adam transfers to both variants.}
    \label{fig:lr-transfer}
\end{figure}

Based on our observations, we can make several predictions:
\begin{enumerate}
    \item With $F=1$, Shampoo without grafting should perform well when layer scalings are similar, but may struggle with highly variable parameter shapes.
    \item Using $S^{-1} \mC_t^{\textrm{Shampoo}^2}$ with $F=1$ should address scaling and staleness and match Shampoo with grafting. This is exact when full-matrix Adam decomposes into a Kronecker product.
    \item Updating an eigenvalue correction at every iteration (\eg like in SOAP) should also address scaling and staleness, matching Shampoo with grafting.
\end{enumerate}

\textbf{Empirical validation.} We ablate different variants of Shampoo on the Imagewoof dataset with vision transformer (ViT) and ConvNeXt V2 models.
We plot the final training loss after $100$ epochs against the learning rate.
Following the specification of the AlgoPerf Shampoo submission, we update the preconditioner every $100$ steps when grafting the learning rate from Adam, and the eigenbasis when using eigenvalue correction.
Our results are presented in Figure \ref{fig:lr-transfer}.
We observe that all three predictions are confirmed: Shampoo$^2$ scaled by $S^{-1}$ or eigenvalue correction is able to match or even surpass Shampoo with grafting on both problems, and Shampoo without grafting can match the final loss for the Imagewoof ViT workload, but not for the ConvNeXt V2 model.

\textbf{Hyperparameter transfer.} The optimal learning rate of Adam transfers to Shampoo with grafting and EShampoo.
However, it is not apparent whether the optimal learning rate transfers to Shampoo$^2$ scaled by $S^{-1}$. Learning rate transfer is only sensible when jointly transferring $\epsilon$ since it affects the effective step size, but Shampoo does not directly add $\epsilon$ to the root of the eigenvalues, as done in Adam and other diagonal-scaling-based optimizers.
While we have not tested this, the optimal learning rate may potentially transfer when matching the effective $\epsilon$.

\textbf{Computational and memory costs.} While updating Shampoo every iteration incurs a prohibitive computational overhead, the eigenvalue correction is only slightly more expensive than Adam grafting since it requires more matrix products.
EShampoo adds the same memory overhead to Shampoo as Adam grafting, specifically requiring an additional $d$-dimensional buffer for the second moment accumulation.
One could reduce this overhead by adopting techniques such as Adam-mini, which only requires a single scalar per parameter block, defined according to the Hessian structure \citep{zhang2025adammini}, and could be applied to both grafting and the eigenvalue correction.
Alternatively, \citet{liu2025muon} propose to scale Muon to approximately match the RMS norm of Adam's update to enable the re-use of Adam's hyperparameters, which may provide a practical heuristic that removes grafting without any memory overhead.
Finally, an eigenvalue correction for the individual Kronecker factors can also correct for the staleness of their eigenvalues and has, in fact, been shown to also remove the need for grafting, while reducing the necessary buffer size from $mn$ to $m+n$ for each $m \times n$ weight matrix \citep{lin2025understanding}.

\textbf{Applicability to other methods.} The issue of stale eigenvalues extends to all sparsely-updated Kronecker-factored preconditioners such as K-FAC variants and TNT. Although we are unaware of any work that combines K-FAC or TNT with grafting, it is common practice to apply the Adam update to gradients preconditioned with K-FAC \citep{pauloski2021kaisa,osawa2023asdl,osawa2023pipefisher,eschenhagen2023kfac}, which we hypothesize could serve a similar function as grafting.

\section{Controlling the approximation error induced by stale eigenbases}
\label{sec:adaptive-eigenbases}
\begin{figure}
    \centering
    \includegraphics[width=\textwidth]{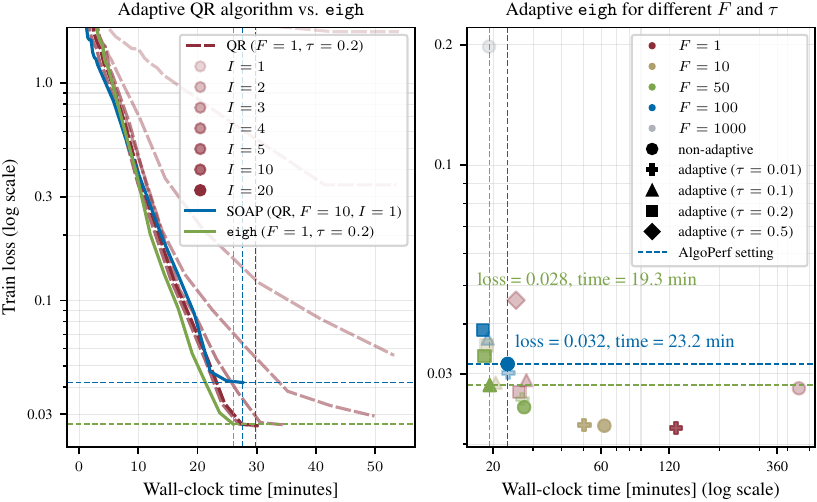}
    \caption{All configurations are for EShampoo. (\textbf{left}) On the Imagewoof ViT problem, setting the maximum number of iterations $I < 10$ with threshold $\tau = 0.2$ for the adaptive QR algorithm leads to significant increase in wall-clock time compared to using adaptive \texttt{eigh}. Even with $I = 10$, adaptive \texttt{eigh} is faster. The default SOAP setting achieves worse final loss and is also slightly slower.
    (\textbf{right}) Using the adaptive criterion to determine when to skip the eigendecomposition (\texttt{eigh}) improves efficiency by 20\% in wall-clock time compared to updating every 100 iterations (AlgoPerf setting).}
    \label{fig:adaptive-eigenbasis}
\end{figure}

While frequent updates to the preconditioner's eigenvalues are important, the impact of periodically computing the eigenbasis remains less clear.
The optimal frequency of the eigenbasis computation should be chosen to balance the time of each iteration against the method's per-iteration convergence rate, and may depend on the stage of training, layer type, and the specific Kronecker factors ($\mL_t$, $\mR_t$) involved.
Shampoo currently uses a fixed update frequency for all matrices, which ignores these distinctions and the computational cost associated with different parameter shapes.

\subsection{Can we adaptively control the approximation error induced by the stale eigenbasis?}

In SOAP, the initial eigenbasis is computed via an eigendecomposition during the first iteration, then is refined by a single power iteration and QR decomposition at all subsequent iterations, \cf Algorithm 4 in \citet{vyas2025soap}, attributed to \citet{wang2024bitshampoo}.\footnote{This approach was referred to as randomized SVD in \citet{wang2024bitshampoo}, Appendix B.}
The method corresponds to a single iteration of a warm-started simultaneous iteration, an extension of the power iteration for iteratively computing eigendecompositions \citep{golub2013matrix}.
However, this approach may still yield a poor approximation to the true eigenbasis, especially when the eigenbasis changes rapidly due to the choice of $\beta_2$ or the dynamics of the stochastic gradient.

To address this, we propose controlling the error induced by the eigenbasis using a \textit{warm-started QR algorithm} that iterates until a relative approximation error criterion is satisfied.
This approach offers three major benefits: (1) each Kronecker factor can be treated independently, allowing adaptive frequency across different layers, factors, and stages of training; (2) evaluating the criterion with the last computed eigenbasis prior to the first QR iteration enables us to \textit{skip} eigenbasis updates; and (3) the error in the eigenbasis can be controlled through a threshold $\tau \in [0, 1)$, quantifying \textit{inexactness}.

Consider a single Kronecker factor $\mL_t \in \R^{m \times m}$ without loss of generality. We want to solve inexactly for an orthogonal matrix $\mQ_{\mL_t}$ such that we obtain an approximation to the eigendecomposition $\mL_t = \mQ_{\mL_t} \mLambda_{\mL_t} \mQ_{\mL_t}^\transpose$, with the diagonal eigenvalues matrix $\mLambda_{\mL_t}$. If the previous approximate eigenbasis $\hat{\mQ}_{\mL_{t - 1}}$ is a good approximation to the current eigenbasis $\mQ_{\mL_t}$, then the approximate eigenvalues induced by the stale eigenbasis $\hat{\mLambda}_{\mL_t} = \hat{\mQ}_{\mL_{t - 1}}^\transpose \mL_t \hat{\mQ}_{\mL_{t - 1}}$ should be nearly diagonal. 

We can leverage this observation to define a unified criterion for skipping the eigenbasis computation and terminating the warm-started QR algorithm. Diagonalizing $\hat{\mLambda}_{\mL_t}$ gives an approximation of $\mL_t$, specifically, $\hat{\mL}_t = \hat{\mQ}_{\mL_{t - 1}} \diag( \hat{\mLambda}_{\mL_t} ) \hat{\mQ}_{\mL_{t - 1}}^\transpose$.\footnote{$\diag(\cdot)$ takes a vector/matrix and returns a diagonal matrix with the vector/matrix's diagonal on its diagonal.} We can bound the relative error of this approximation induced by the stale eigenbasis, or equivalently the relative error in the Frobenius norm of the off-diagonal entries of $\hat{\mLambda}_{\mL_t}$, which is cheaper to compute:
\begin{tcolorbox}[colframe=tue2, colback=tue2!3, title={Adaptive eigenbasis computation frequency criterion},bottom=0mm,top=0mm,middle=0mm]
\begin{equation}
\label{eq:criterion}
    \frac{||\mL_t - \hat{\mL}_t||_F}{||\mL_t||_F} = \frac{||\hat{\mQ}_{\mL_{t - 1}}^\transpose \mL_t \hat{\mQ}_{\mL_{t - 1}} - \diag(\hat{\mLambda}_{\mL_t}) ||_F}{||\hat{\mQ}_{\mL_{t - 1}}^\transpose \mL_t \hat{\mQ}_{\mL_{t - 1}}||_F} = \frac{||\hat{\mLambda}_{\mL_t} - \diag(\hat{\mLambda}_{\mL_t}) ||_F}{||\hat{\mLambda}_{\mL_t}||_F} \leq \tau.
\end{equation}
\end{tcolorbox}
This condition provides a guarantee of the quality of the eigendecomposition approximation. If the condition is satisfied, we reuse the previous eigenbasis $\hat{\mQ}_{\mL_t} = \hat{\mQ}_{\mL_{t - 1}}$; otherwise, we refine it through the QR algorithm until the criterion is met. A complete pseudocode is provided in \Cref{app:algorithms-pseudocode}, \Cref{alg:adaptive-qr-iteration}.
Note that evaluating the approximate eigenvalues $\hat{\mLambda}_{\mL_t}$ at the first step requires two matrix multiplications, which is significantly cheaper than computing a step of the QR algorithm.

\subsection{Does this adaptivity translate to efficiency gains?}

\begin{table}
  \caption{Results on a subset of the AlgoPerf workloads. We show the mean and standard error of the steps/time to the targets across the runs that reach them. See \Cref{app:additional-experiments}, \Cref{tab:full-algoperf-results} for more results.}
  \label{tab:algoperf-results}
  \centering
  \begin{tabular}{cllll}
    \toprule
    Workload & Shampoo Variant & Hits Target & Steps & Time [min] \\
    \midrule
    \multirow{3}{4em}{FastMRI} & Adam grafting ($F=100$) & $4/5$ & $4301 \pm 109$ & $13.96 \pm 0.44$ \\
    & $\mC^\mathrm{EShampoo}$ ($F=100$) & $5/5$ & $2536 \pm 66$ & $10.44 \pm 0.21$ \\
    & $\mC^\mathrm{EShampoo}$ ($\tau=0.1$, $F=50$) & $5/5$ & $2468 \pm 145$ & $10.81 \pm 0.72$ \\
    \midrule
    \multirow{3}{4em}{ImageNet ViT} & Adam grafting ($F=100$) & $1/1$ & $79907$ & $894.27$ \\
    & $\mC^\mathrm{EShampoo}$ ($F=100$) & $1/1$ & $76226$ & $894.85$ \\
    & $\mC^\mathrm{EShampoo}$ ($\tau=0.1$, $F=50$) & $1/1$ & $77459$ & $935.89$ \\
    \midrule
    \multirow{3}{4em}{OGBG} & Adam grafting ($F=100$) & $2/5$ & $12574 \pm 708$ & $39.20 \pm 1.88$ \\
    & $\mC^\mathrm{EShampoo}$ ($F=100$) & $3/5$ & $8320 \pm 1203$ & $33.02 \pm 4.05$ \\
    & $\mC^\mathrm{EShampoo}$ ($\tau=0.1$, $F=50$) & $5/5$ & $7117 \pm 328$ & $27.55 \pm 3.49$ \\
    \bottomrule
  \end{tabular}
\end{table}

The practical efficiency gains of adaptively controlling the approximation error depend on multiple factors.
First, the efficiency of the QR algorithm is determined by how its computational cost compares to a standard eigendecomposition, which in turn depends on both the cost of each QR iteration and the total number of iterations needed to satisfy the threshold $\tau$.
Second, evaluating the criterion adds a small constant overhead independent of the actual frequency of eigendecompositions.

Interestingly, we found that setting a smaller maximum number of QR iterations ($I < 10$) per step results in a significant slowdown in wall-clock time, as the total number of QR iterations accumulated over all training steps is higher than when using a larger maximum ($I \geq 10$).
However, even with a larger number of maximum iterations, the QR algorithm was slightly more expensive than computing eigendecompositions with \texttt{torch.linalg.eigh} (shortened as \texttt{eigh}) whenever \Cref{eq:criterion} does not hold.
The default configuration of SOAP, which computes a single step of the warm-started simultaneous iteration every 10 steps, was also slightly slower in wall-clock time compared to adaptively computing eigendecompositions and reaches a worse final loss.
This result conflicts with the observation in Figure 7 in \citet{vyas2025soap}, which may be due to differences in the types of workloads considered; see \Cref{fig:adaptive-eigenbasis} (left).

Because of this result, we primarily leverage the criterion for determining the frequency of calling \texttt{eigh}. To further bound the total possible number of eigendecompositions performed over the course of training, we evaluate the criterion at a fixed frequency $F$ as opposed to every step. We ablate different choices of $F$ and $\tau$ in \Cref{fig:adaptive-eigenbasis} (right). We compare against the baseline setting of the winning AlgoPerf submission, which re-computes the eigendecomposition of all factor matrices every $100$ steps on the Imagewoof ViT problem. We observe that this setting without adaptivity requires $20\%$ more wall-clock time compared to an adaptive setting ($\tau=0.1$, $F=50$).

To test whether the results generalize to other problems, we consider a subset of the AlgoPerf workloads and compare the winning Shampoo submission with Adam grafting to: (1) EShampoo with the same fixed eigenbasis computation frequency of $F=100$; and (2) EShampoo with $\tau = 0.1$ and $F = 50$;
see \Cref{tab:algoperf-results}.
First, as predicted by \Cref{sec:grafting}, EShampoo with the fixed eigenbasis update frequency matches or outperforms Shampoo with Adam grafting in steps and wall-clock time, using the same hyperparameters.
Second, using the adaptive criterion with $\tau=0.1$ and $F = 50$ matches the fixed frequency $F=100$ for the FastMRI and OGBG workloads, and does slightly worse for ImageNet ViT.
We also present results with other fixed and adaptive frequencies in \Cref{app:additional-experiments}, \Cref{tab:full-algoperf-results}.
Overall, we find that the performance on the considered problems is quite robust to the eigenbasis computation frequency (\cf \Cref{app:additional-experiments}, \Cref{tab:full-algoperf-results}).
We expect that when Shampoo's convergence benefits from more frequent eigenbasis updates, adaptively determining the frequency can result in higher efficiency.

\subsection{What patterns of adaptivity emerge?}
\label{sec:patterns}

\begin{figure}
    \centering
    \includegraphics[width=\textwidth]{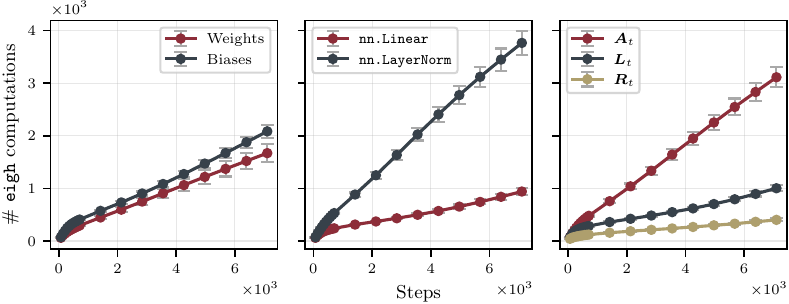}
    \caption{We show the mean with standard error across preconditioners corresponding to the labels in the legends, for EShampoo with $\tau=0.01$ and $F=1$ on Imagewoof ViT. The eigenbases for biases and layer normalization parameters are changing faster than for weight matrices and linear layers, respectively.}
    \label{fig:adaptivity-patterns}
\end{figure}

The criterion in \Cref{eq:criterion} also provides insight into how rapidly the eigenbasis changes for different parameter types.
On Imagewoof ViT, we set $\tau = 0.1$ and check the criterion at every step ($F = 1$); see \Cref{fig:adaptivity-patterns}.
We observe more frequent eigendecomposition computations early in training that trend towards a constant update frequency at the end of training.
When comparing different parameter types, we find that the eigenbases of preconditioners for bias terms evolve more rapidly than those for weights. Similarly, layer normalization parameters require almost $4 \times$ as many eigenbasis updates as linear layer parameters.

We also compare the number of eigendecompositions across the two Kronecker factors $\mL_t$ and $\mR_t$ for weight matrices, and $\mA_t$ for biases and layer normalization parameters.\footnote{Full-matrix Adam is also used for weight matrices $\mW_t \in \R^{m \times n}$ with $mn \leq $ \texttt{max\_preconditioner\_dim}, which is a hyperparameter to control the largest possible dimension of the preconditioner.} 
The eigenbases of $\mA_t$ are updated most frequently, followed by $\mL_t$, then $\mR_t$.
The same trend holds when removing learnable layer norm parameters and for ConvNeXt V2 trained on the same dataset (\cf \Cref{app:additional-experiments}, \Cref{fig:more_patterns}).
A similar analysis for a Llama 3 model with 324 million parameters trained on 3.2 billion tokens of C4 data reveals a different trend, namely, $\mL_t$ evolves faster than $\mR_t$ and $\mA_t$, which correspond to RMS normalization parameters (\cf \Cref{app:additional-experiments}, \Cref{fig:llama_patterns}).

\subsection{How does the error induced by the eigenbasis affect convergence?}

While we have focused on controlling the error induced by the preconditioner's stale eigenbasis, our primary concern is its effect on convergence, rather than the error itself.
In the Imagewoof ViT setting, we find that the approximation quality (determined by $\tau$) during early iterations is far more critical than during later iterations; see \Cref{fig:error_impact} (left).
Specifically, setting $\tau=0.8$ for the first $90\%$ and $\tau=0.01$ for the last $10\%$ of iterations yields a final loss nearly identical to using $\tau=0.8$ throughout all of training ($\approx 0.09-0.1$).
In contrast, setting $\tau=0.01$ for the first $10\%$ and $\tau=0.8$ for the remaining $90\%$ of iterations leads to a significantly better final loss ($\approx 0.04$).

Remarkably, freezing the eigenbasis after the first iteration (by setting $\tau = 0.99$ for all of training) still significantly outperforms AdamW ($\approx 0.12$ vs. $0.3$).\footnote{For $5$ out of $172$ Kronecker factors, the eigenbasis is computed twice instead of just once.}
This highlights that frequent eigenbasis computations are most beneficial early in training, consistent with previous observations on Shampoo, PSGD, and SOAP \citep{ishikawa2024when,walters2025kron_torch,nestler2025heavyball,vyas2025improving}.

\begin{figure}[t]
    \centering
    \includegraphics[width=\linewidth]{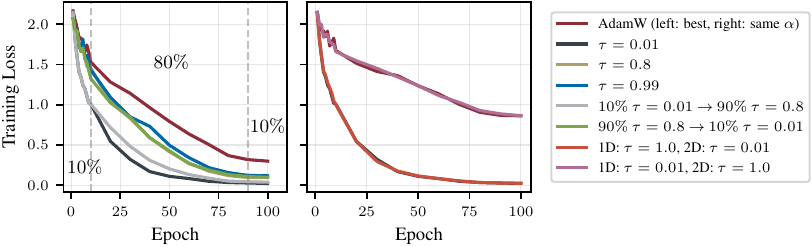}
    \caption{All configurations are for EShampoo. (\textbf{left}) The error in the eigenbases is dramatically more important for early iterations. A single eigenbasis computation at the first iteration ($\tau=0.99$) is sufficient to outperform AdamW. (\textbf{right}) The difference between the convergence behavior of AdamW and EShampoo on this problem can be exclusively attributed to the eigenbases corresponding to 2D parameters.}
    \label{fig:error_impact}
\end{figure}

To further investigate the discrepancy in the rate of change in the eigenbases corresponding to 1D and 2D parameters (\cf \Cref{fig:adaptivity-patterns}, right), we compare an Imagewoof ViT run with $\tau=0.01$ for 2D parameters and no change of basis, \ie Adam, for 1D parameters and vice versa.
Surprisingly, using Adam for 1D parameters does not affect Shampoo's convergence or final loss.
Conversely, running Adam for 2D parameters and changing the basis according to $\tau=0.01$ for 1D parameters closely matches Adam's convergence and final loss; see \Cref{fig:error_impact} (right).

Given that $45\%$ of the preconditioner matrices correspond to $1$D parameters and their eigenbases change more rapidly, this suggests that many eigendecompositions are not needed at all. For example, using $\tau = 0.01$ for all parameters increases runtime by $2.9\times$ compared to only applying EShampoo to 2D parameters and Adam to 1D parameters, with no improvement in final loss.
In practice, SOAP already uses Adam for 1D parameters in order to reduce its computational and memory overhead \citep{vyas2025soap}.

\section{Discussion and conclusion}
\label{sec:conclusion}

In this paper, we demonstrate that frequently updating the eigenvalues while periodically updating the eigenbasis of Shampoo's preconditioner provides a principled and practical approach for eliminating grafting.
It remains an open question whether the same correction applies directly to the AdaGrad summation, or if a more sophisticated, basis-aware eigenvalue correction is needed (\cf \Cref{sec:basis-aware-derivation}).
We also show how controlling the approximation error induced by the stale eigenbasis can improve efficiency.
In order to determine the eigenbasis computation frequency in a truly problem-agnostic manner, we must understand how approximation error impacts convergence, as well as integrate systems-level considerations, such as batched kernel efficiency, into our algorithmic design.
Further exploration is needed to understand how these trade-offs and techniques scale to larger models, such as large language models.
While our work focuses on Shampoo, the ideas are not limited to the AdaGrad family, and can be adapted to other methods such as K-FAC and TNT.

From a theoretical perspective, a major open question is how to incorporate approximation quality into regret bounds for Shampoo (\cf \Cref{sec:optimal}).
Finally, while we have implicitly treated full-matrix Adam as the right algorithm to approximate, alternative interpretations of Shampoo may better explain Shampoo's effectiveness in practice \citep{carlson2015stochastic,carlson2015preconditioned,benzing2022gradient,bernstein2024oldoptimizernewnorm,maes2024understanding,pethick2025training,zhang2025concurrence,xie2025structured}.

\section*{Acknowledgments}
We thank Anna Cai, Parameswaran Raman, and Ke Sang for their in-depth review of the paper.
We are grateful for insightful discussions with Ganesh Ajjanagadde, Rohan Anil, Anna Cai, Rong Jin, Jihao Andreas Lin, Bruno Mlodozeniec, Vinay Rao, Isaac Reid, and Xingyu (Alice) Yang.
We also acknowledge managerial support from Yuchen Hao, Guna Lakshminarayanan, Maxim Naumov, Sandeep Parab, Chunqiang Tang, and Lin Xiao.
Lastly, we thank the anonymous reviewers for their productive feedback and suggestions of experiments and Kyunghun Nam and Yushun Zhang for pointing out minor errors in a preprint of this work.

Runa Eschenhagen is supported by ARM, the Cambridge Trust, and the Qualcomm Innovation Fellowship.
Richard E. Turner is supported by Google, Amazon, ARM, Improbable and an EPSRC Prosperity Partnership (EP/T005386/1) and the EPSRC Probabilistic AI Hub (EP/Y028783/1).

\bibliography{references}
\bibliographystyle{iclr_bibstyle}

\newpage
\appendix
\begin{table}
  \caption{Shampoo variants and their properties.}
  \label{tab:grafting-shampoo-properties}
  \centering
  \begin{tabular}{lllll}
    \toprule
    \cmidrule(r){1-2}
    \makecell{Shampoo \\ variant} & \makecell{Justified from \\ approximation \\ perspective} & \makecell{Matches grafting \\ in steps} & \makecell{Matches grafting \\ in compute \& memory} & \makecell{Learning rate \\ transfer} \\
    \midrule
    grafting (Adam) & \xmark & \cmark &  \cmark & \cmark  \\
    $\mC^\mathrm{Shampoo}$ & \xmark & \xmark & \xmark & \xmark \\
    $S^{-1} \mC^{\mathrm{Shampoo}^2}$ & \cmark & \cmark & \xmark & ? \\
    $\mC^\mathrm{EShampoo}$ & \cmark & \cmark & \cmark & \cmark \\
    \bottomrule
  \end{tabular}
\end{table}

\section{Connecting full-matrix AdaGrad to the Fisher}
\label{sec:fisher-connection}

We can also consider a more general class of preconditioned stochastic gradient methods with other choices of $\mC_t$ and $p > 0$ 
that update the parameters at each iteration $t$ by \Cref{eq:PGD}.
For example, one could choose $\mC_t$ as approximations of the Hessian $\nabla^2 \gL(\vtheta)$ via subsampling or secant approximation with $p = 1$, which yields the class of \textit{stochastic Newton} or \textit{quasi-Newton} methods \citep{keskar2016adaqn,bollapragada2018progressive,berahas2020quasi,goldfarb2020practical}.
Another common choice is the Fisher information matrix $\mF_t = \E_{\hat{\vy} \sim f_{\vtheta}(\vx), \vx \sim p_\gD(\vx)} \big[\nabla_{\vtheta} \ell(f_{\vtheta}(\vx), \hat{\vy}) \nabla_{\vtheta} \ell(f_{\vtheta}(\vx), \hat{\vy})^\transpose\big]$, which yields the \textit{natural gradient} or, for common loss functions, \textit{generalized Gauss-Newton method} with $p = 1$ \citep{amari1998natural,martens2014new}.
A summary of different methods is provided in \citet{bottou2018optimization}.

By instead summing over per-sample gradient outer products, $\hat{\mA}_t$ can be connected to the empirical Fisher. In general, the empirical Fisher is not expected to be a good approximation of the Fisher information matrix \citep{kunstner2019limitations}.
However, by replacing the accumulation with an expectation over the conditional distribution given by the model $f_{\vtheta}$, full-matrix AdaGrad can be made equivalent to the Fisher. This equivalence reveals a natural extension of Shampoo to approximate the Fisher, called Tensor Normal Training \citep[TNT]{ren2021tensor}, and is closely related to the K-FAC preconditioner \citep{anil2020scalable}. In fact, both approximations are exactly equivalent to the (block-diagonal) Fisher for simple cases such as deep linear networks (also with weight sharing) and mean squared error loss \citep{bernacchia2018exact,eschenhagen2023kfac,morwani2025new}.
The modifications in TNT deviate from the original Shampoo update \citep{gupta2018shampoo} because it was motivated by upper bounds to full-matrix AdaGrad in its non-smooth, convex regret analysis, rather than this Fisher approximation perspective.

\section{Algorithms pseudocode}
\label{app:algorithms-pseudocode}

\begin{algorithm}
    \caption{Idealized eigenvalue-corrected Shampoo pseudocode}
    \label{alg:eigenvalue-correction}
    \begin{algorithmic}[1]
        \Require Parameter $\mW_1 \in \R^{m \times n}$, learning rate $\alpha_t > 0$, $\epsilon > 0$.
        \State Initialize $\mL_0 = \vzero \in \R^{m \times m}$, $\mR_0 = \vzero \in \R^{n \times n}$, and $\mD = \vzero \in \R^{m \times n}$.
        \For{$t = 1, ..., T$}
            \State Compute (mini-batch) stochastic gradient: $\mG_t = \nabla_{\theta} \ell(f_{\vtheta_t}(\vx), \vy)$.
            \State Update factor matrices: 
            \begin{equation}
                \label{eq:ema-shampoo}
                \mL_t = \beta_2 \mL_{t - 1} + (1 - \beta_2) \mG_t \mG_t^\transpose, ~~~ \mR_t = \beta_2 \mR_{t - 1} + (1 - \beta_2) \mG_t^\transpose \mG_t.
            \end{equation}
            \State Compute orthonormal eigenbasis of the factor matrices:
            \begin{equation}
                \label{eq:eigenbasis-computation}
                \begin{aligned}
                   \mQ_{\mL_t} & = \mathrm{eigvec}(\mL_t), & \mQ_{\mR_t} & = \mathrm{eigvec}(\mR_t).
                \end{aligned}
            \end{equation}
            \State Transform gradient basis: $\tilde{\mG}_t = \mQ_{\mL_t}^\transpose \mG_t \mQ_{\mR_t}$.
            \State Compute or update eigenvalue correction: 
            \begin{equation}
                \label{eq:eigenvalue-correction}
                \mD_t^* = \argmin_{\mD \in \R^{m \times n}} \|\mA_t - (\mQ_{\mR_t} \otimes \mQ_{\mL_t}) \diag(\vectorized(\mD)) (\mQ_{\mR_t} \otimes \mQ_{\mL_t})\|_F.
            \end{equation}
            \State Compute update: $\mW_{t + 1} = \mW_t - \alpha_t \mQ_{\mL_t} (\tilde{\mG}_t \oslash (\sqrt{\mD_t^*} + \epsilon \vone \vone^\transpose )) \mQ_{\mR_t}$.
        \EndFor
    \end{algorithmic}
\end{algorithm}

\begin{algorithm}
    \caption{EShampoo pseudocode (as implemented for the experiments)}
    \label{alg:eshampoo}
    \begin{algorithmic}[1]
        \Require Parameter $\mW_1 \in \R^{m \times n}$, learning rate $\alpha_t > 0$, $\epsilon > 0, \beta_1, \beta_2 \in [0, 1)$, weight decay $\lambda \geq 0$, eigenbasis computation frequency $F \in \mathbb{N}$, and threshold $\tau \in [0, 1]$ for \Cref{eq:criterion} and \Cref{alg:adaptive-qr-iteration}.
        \State Initialize $\mM_0 = \vzero \in \R^{m \times n}, \mL_0 = \vzero \in \R^{m \times m}$, $\mR_0 = \vzero \in \R^{n \times n}, \mQ_{\mL_0} = \mathbf{I}_m, \mQ_{\mR_0} = \mathbf{I}_n$, and $\mD_0 = \vzero \in \R^{m \times n}$.
        \For{$t = 1, ..., T$}
            \State Compute (mini-batch) stochastic gradient: $\mG_t = \nabla_{\theta} \ell(f_{\vtheta_t}(\vx), \vy)$.
            \State Compute exponential moving average of the gradient: $\mM_t = \beta_1 \mM_{t-1} + \left( 1 - \beta_1 \right) \mG_t$.
            \State Update factor matrices: 
            \begin{equation}
                \label{eq:ema-shampoo-practical}
                \mL_t = \beta_2 \mL_{t - 1} + (1 - \beta_2) \mG_t \mG_t^\transpose, ~~~ \mR_t = \beta_2 \mR_{t - 1} + (1 - \beta_2) \mG_t^\transpose \mG_t.
            \end{equation}
            \If{$t \mod F = 0$ and not \Cref{eq:criterion}}
            \State Compute eigenbasis of factor matrices (\eg with \texttt{torch.linalg.eigh} or \Cref{alg:adaptive-qr-iteration}):
            \begin{equation}
                \begin{aligned}
                    \mQ_{\mL_t} & = \mathrm{eigvec}(\mL_t / (1 - \beta_2^t)), & \mQ_{\mR_t} & = \mathrm{eigvec}(\mR_t / (1 - \beta_2^t)).
                \end{aligned}
            \end{equation}
            \Else
            \begin{equation}
                \begin{aligned}
                    \mQ_{\mL_t} & = \mQ_{\mL_{t-1}}, & \mQ_{\mR_t} & = \mQ_{\mR_{t-1}}.
                \end{aligned}
            \end{equation}
            \EndIf
            \State Transform gradient basis: $\tilde{\mG}_t = \mQ_{\mL_t}^\transpose \mG_t \mQ_{\mR_t}$.
            \State Compute or update eigenvalue correction: $\mD_t = \beta_2 \mD_{t - 1} + (1 - \beta_2) \tilde{\mG}_t^{\odot 2}$.
            \State Perform bias correction:
            \begin{equation*}
                \tilde{\mM}_t = \mM_t / (1 - \beta_1^t), ~~~ \tilde{\mD}_t = \mD_t / (1 - \beta_2^t).
            \end{equation*}
            \State Compute parameter update: 
            \begin{equation}
                \mW_{t + 1} = \mW_t - \alpha_t \left( \mQ_{\mL_t} \left( \mQ_{\mL_t}^\transpose \tilde{\mM}_t \mQ_{\mR_t} \oslash \left(\sqrt{\tilde{\mD}_t} + \epsilon \vone \vone^\transpose \right) \right) \mQ_{\mR_t}^\transpose  + \lambda \mW_t \right).
            \end{equation}
        \EndFor
    \end{algorithmic}
\end{algorithm}

\begin{algorithm}
    \caption{Shampoo with Adam grafting pseudocode}
    \label{alg:shampoo-adam-grafting}
    \begin{algorithmic}[1]
        \Require Parameter $\mW_1 \in \R^{m \times n}$, learning rate $\alpha_t > 0$, exponential moving average constant $\beta_2 \in (0, 1)$, $\epsilon > 0$.
        \State Initialize $\mL_0 = \vzero \in \R^{m \times m}$, $\mR_0 = \vzero \in \R^{n \times n}$, and $\mD = \vzero \in \R^{m \times n}$.
        \For{$t = 1, ..., T$}
            \State Compute (mini-batch) stochastic gradient: $\mG_t = \nabla_{\theta} \ell(f_{\vtheta_t}(\vx), \vy)$.
            \State Update factor matrices:
            \begin{equation*}
                \mL_t = \beta_2 \mL_{t - 1} + (1 - \beta_2) \mG_t \mG_t^\transpose, ~~~ 
                \mR_t = \beta_2 \mR_{t - 1} + (1 - \beta_2) \mG_t^\transpose \mG_t.
            \end{equation*}
            \State Update Adam grafting state: $\mD_t = \beta_2 \mD_{t - 1} + (1 - \beta_2) \mG_t^{\odot 2}$.
            \State Compute matrix root inverse of the factor matrices:
            \begin{equation*}
                \begin{aligned}
                    \mL_t^{-1/4} & = \mathrm{rootinv}(\mL_t), & \mR_t^{-1/4} & = \mathrm{rootinv}(\mR_t).
                \end{aligned}
            \end{equation*}
            \State Compute update: $\mW_{t + 1} = \mW_t - \alpha_t \frac{\|- \mG_t \oslash (\sqrt{\mD_t} + \epsilon \vone \vone^T)\|_F}{\| - \mL_t^{-1/4} \mG \mR_t^{-1/4}\|_F} \mL_t^{-1/4} \mG_t \mR_t^{-1/4}$.
        \EndFor
    \end{algorithmic}
\end{algorithm}

\begin{algorithm}
    \caption{Warm-started QR iteration with relative error termination criterion.}
    \label{alg:adaptive-qr-iteration}
    \begin{algorithmic}[1]
        \Require Matrix $\mL = \mL_t$, previous eigenbasis $\hat{\mQ} = \hat{\mQ}_{\mL_{t - 1}}$, relative tolerance $\tau \in [0, 1)$, maximum number of iterations $I$.
        \State $i \leftarrow 0$
        \State $\hat{\mLambda} \leftarrow \hat{\mQ}^\transpose \mL \hat{\mQ}$
        \While{$\|\hat{\mLambda} - \diag(\hat{\mLambda}) \|_F \leq \tau \|\hat{\mLambda}\|_F$ and $i < I$}
        \State $\mQ, \mR \leftarrow \texttt{QR}(\hat{\mLambda})$
        \State $\hat{\mLambda} \leftarrow \mR \mQ$
        \State $\hat{\mQ} \leftarrow \hat{\mQ} \mQ$
        \State $i \leftarrow i + 1$
        \EndWhile \\
        \Return $\hat{\mQ}$, $\hat{\mLambda}$
    \end{algorithmic}
\end{algorithm} 

In this section, we provide the pseudocode for all algorithms, including idealized (\Cref{alg:eigenvalue-correction}) and practical (\Cref{alg:eshampoo}) eigenvalue-corrected Shampoo, Shampoo with Adam grafting (\Cref{alg:shampoo-adam-grafting}), and the adaptive warm-started QR algorithm (\Cref{alg:adaptive-qr-iteration}). \Cref{alg:eigenvalue-correction} and \Cref{alg:shampoo-adam-grafting} present simplified versions of the algorithm ignoring common modifications like momentum or an exponential moving average over the stochastic gradient, bias corrections, and weight decay.

Note that different instances of eigenvalue-corrected Shampoo can be employed by changing the exponential moving average in \Cref{eq:ema-shampoo}, eigenbasis computation in \Cref{eq:eigenbasis-computation}, or eigenvalue correction update in \Cref{eq:eigenvalue-correction}.
SOAP delays the update of the eigenbasis until after the update step, approximates the eigenvalue correction (\Cref{eq:eigenvalue-correction}) by 
\begin{equation}
    \label{eq:soap-update}
    \mD_t = \beta_2 \mD_{t - 1} + (1 - \beta_2) \tilde{\mG}_t^{\odot 2},
\end{equation}
and uses a single iteration of the warm-started simultaneous iteration.
Additionally, there is a discrepancy between the SOAP algorithm presented in the paper and the official implementation:
in the paper, the exponential moving average of the gradient $\mG_t$ is used (line 4 of Algorithm 3 in \citet{vyas2025soap}), whereas the implementation computes the exponential moving average over the rotated gradient $\tilde{\mG}_t$.\footnote{See \url{https://github.com/nikhilvyas/SOAP/blob/f42d296cb4146a67fbe811371e6badb9a39cc54d/soap.py\#L167}.} This can be interpreted as running Adam in Shampoo’s eigenspace, which shares similarities to the GaLore algorithm \citep{zhao2024galore,su2025galore}.

In contrast, our implementation of EShampoo does not delay the update of the preconditioner, uses the same approximation of the eigenvalue correction as SOAP, uses \texttt{torch.linalg.eigh} for the eigendecomposition (unless indicated otherwise), and computes the exponential moving average over the gradient $\mG_t$.
Both algorithms use bias corrections and (decoupled) weight decay.
We provide the pseudocode for EShampoo as it was implemented for our experiments in \Cref{alg:eshampoo}.
If $F \neq 1$, the algorithm reduces to AdamW until iteration $t = F$.

While we do not present experimental results here, \Cref{alg:adaptive-qr-iteration} or \Cref{eq:criterion} for \texttt{eigh} can also be used in Shampoo without an eigenvalue correction.
This implementation stores the previous eigendecomposition of the Kronecker factors instead of the previous root-inverse Kronecker factors.
When we skip an eigendecomposition, we maintain the previous eigenbasis and replace the previous eigenvalues with the estimated eigenvalues $\diag(\hat{\mLambda}_{\mL_t}) = \diag(\mQ_{\mL_{t - 1}}^\transpose \mL_t \mQ_{\mL_{t - 1}})$ used in \Cref{eq:criterion}.
Alternatively, one can compute an exponential moving average over these estimated eigenvalues as done in KL-Shampoo \citep{lin2025understanding}, which also removes the need for grafting like the eigenvalue correction in SOAP and EShampoo (\cf \Cref{sec:grafting}).

\section{Proofs}
\label{app:proofs}

\setcounter{lemma}{0}  %
\begin{lemma}
    \label{lem:bound-kronecker-update-appendix}
    Let $\mU = \mQ_{\mL} (\mD^{\odot-p} \odot (\mQ_{\mL}^\transpose \mG \mQ_{\mR})) \mQ_{\mR}^\transpose \in \R^{m \times n}$ be the generalized eigendecomposed Kronecker-factored update given by orthogonal matrices $\mQ_{\mL} \in \R^{m \times m}$, $\mQ_{\mR} \in \R^{n \times n}$, and dense scaling matrix $\mD \in \R^{m \times n}$. Then we have:
    \begin{equation}
        (\max_{i, j} \mD_{i, j})^{-p} || \mG ||_F \leq || \mU ||_F \leq (\min_{i, j} \mD_{i, j})^{-p} || \mG ||_F.
    \end{equation}
\end{lemma}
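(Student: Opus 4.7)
The plan is to proceed in two short steps: first, strip away the orthogonal transformations using the invariance of the Frobenius norm, and then bound the remaining Hadamard product entrywise.

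\textbf{Step 1: Reduce to a Hadamard product.} Since $\mQ_{\mL}$ and $\mQ_{\mR}$ are orthogonal, and since $\|\mA \mB\|_F = \|\mA\|_F$ whenever $\mB$ is orthogonal (and similarly for left multiplication), I would first write
\begin{equation*}
    \|\mU\|_F = \|\mQ_{\mL} (\mD^{\odot -p} \odot (\mQ_{\mL}^\transpose \mG \mQ_{\mR})) \mQ_{\mR}^\transpose\|_F = \|\mD^{\odot -p} \odot \tilde{\mG}\|_F,
\end{equation*}
where $\tilde{\mG} := \mQ_{\mL}^\transpose \mG \mQ_{\mR}$. By the same orthogonal invariance, $\|\tilde{\mG}\|_F = \|\mG\|_F$, so the entire proof reduces to controlling $\|\mD^{\odot -p} \odot \tilde{\mG}\|_F$ in terms of $\|\tilde{\mG}\|_F$.

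\textbf{Step 2: Entrywise bounds on the Hadamard product.} Expanding the squared Frobenius norm entrywise gives
\begin{equation*}
    \|\mD^{\odot -p} \odot \tilde{\mG}\|_F^2 = \sum_{i, j} \mD_{i,j}^{-2p} \tilde{\mG}_{i,j}^2.
\end{equation*}
Assuming $\mD_{i,j} > 0$ (which is implicit since $\mD^{\odot -p}$ appears), the scalar weights $\mD_{i,j}^{-2p}$ satisfy $(\max_{i,j} \mD_{i,j})^{-2p} \leq \mD_{i,j}^{-2p} \leq (\min_{i,j} \mD_{i,j})^{-2p}$. Plugging these uniform bounds into the sum and pulling the constants out yields
\begin{equation*}
    (\max_{i, j} \mD_{i,j})^{-2p} \|\tilde{\mG}\|_F^2 \leq \|\mD^{\odot -p} \odot \tilde{\mG}\|_F^2 \leq (\min_{i, j} \mD_{i,j})^{-2p} \|\tilde{\mG}\|_F^2.
\end{equation*}
Taking square roots and substituting $\|\tilde{\mG}\|_F = \|\mG\|_F$ gives exactly the claimed two-sided bound on $\|\mU\|_F$.

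\textbf{Difficulty.} There is essentially no obstacle here: the lemma is a direct combination of Frobenius invariance under orthogonal transformations and a one-line entrywise inequality. The only subtlety worth flagging in the writeup is the implicit positivity of the entries of $\mD$ (needed so that $\mD^{\odot -p}$ and the $\min/\max$ bounds are well-defined and monotone in $\mD_{i,j}$), which matches the paper's use of $\mD$ as a matrix of variance-like quantities accumulated from squared gradient entries.
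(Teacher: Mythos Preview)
Your proposal is correct and follows essentially the same approach as the paper's own proof: both use the orthogonal invariance of the Frobenius norm to strip away $\mQ_{\mL}$ and $\mQ_{\mR}$, and then bound the Hadamard product $\mD^{\odot -p} \odot \tilde{\mG}$ entrywise by the extreme values of $\mD_{i,j}^{-p}$. Your version just makes the entrywise sum explicit, whereas the paper states the bound more compactly as an interval containment, but the argument is the same.
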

\begin{proof}
    Since the Frobenius norm of a matrix is invariant to orthogonal transformations and the entries of $\mD$ are bounded, i.e., $\min_{i, j} \mD_{i, j} \leq \mD_{i, j} \leq \max_{i, j} \mD_{i, j}$, we can show that
    \begin{align*}
        \|\mU\|_F & = \|\mQ_{\mL} (\mD^{\odot-p} \odot (\mQ_{\mL}^\transpose \mG \mQ_{\mR})) \mQ_{\mR}^\transpose\|_F \\
        & = \|\mD^{\odot-p} \odot (\mQ_{\mL}^\transpose \mG \mQ_{\mR}))\|_F \\
        & \in [(\max_{i, j} \mD_{i, j})^{-p}, (\min_{i, j} \mD_{i, j})^{-p}] \cdot \|\mQ^\transpose_{\mL} \mG \mQ_{\mR}\|_F \\
        & \in [(\max_{i, j} \mD_{i, j})^{-p}, (\min_{i, j} \mD_{i, j})^{-p}] \cdot \|\mG\|_F.
    \end{align*}
\end{proof}

\setcounter{proposition}{0}  %
\begin{proposition}
    \label{prop:magnitude-appendix}
    Assume that $\E[\vg\vg^\transpose]$ is symmetric positive definite. The magnitude of the updates for full-matrix Adam, diagonal Adam, and eigenvalue-corrected Shampoo are all bounded by the power of the extreme eigenvalues of full-matrix Adam: 
    \begin{equation}
        \label{eq:update-magnitude-bound-appendix}
        \lambda_{\max}(\E[\vg \vg^\transpose])^{-p} \|\mG\|_F \leq \|\mU\|_F \leq \lambda_{\min}(\E[\vg \vg^\transpose])^{-p} \|\mG\|_F,
    \end{equation}
    for all $p > 0$. However, under the simplifying assumption that $\E[\mG] = \vzero$ and $\mG_{i, j}$ is independent from $\mG_{k, l}$ for $(i, j) \neq (k, l)$ and has bounded second moment, $\lambda_{\min}(\mathbb{E}[\vg \vg^\transpose]) \leq \mathbb{E}[\mG_{i, j}^2] \leq \lambda_{\max}(\mathbb{E}[\vg \vg^\transpose])$ and Shampoo has dimension-dependent bounds:
    \begin{equation}
        \label{eq:dimension-dependent-magnitude-appendix}
        m^{-p/2} n^{-p/2} \lambda_{\max}(\E[\vg \vg^\transpose])^{-p} \|\mG\|_F \leq \|\mU\|_F \leq m^{-p/2} n^{-p/2} \lambda_{\min}(\E[\vg \vg^\transpose])^{-p} \|\mG\|_F.
    \end{equation}
\end{proposition}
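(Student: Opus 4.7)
The four updates all fit (exactly or after a minor reshaping) the template of Lemma~\ref{lem:bound-kronecker-update-appendix}, so the plan is to identify the scaling matrix $\mD$ in each case and bound its entries via the Rayleigh quotient against $\mA := \E[\vg\vg^\transpose]$. For full-matrix Adam, I would diagonalize $\mA = \mQ_\mA \mLambda_\mA \mQ_\mA^\transpose$; the update $-\mQ_\mA \mLambda_\mA^{-p} \mQ_\mA^\transpose \vg$ has $\ell^2$-norm bounded between $\lambda_{\max}^{-p}\|\vg\|_2$ and $\lambda_{\min}^{-p}\|\vg\|_2$ by orthogonal invariance and $\|\vg\|_2 = \|\mG\|_F$. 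For diagonal Adam, Lemma~\ref{lem:bound-kronecker-update-appendix} applies with $\mQ_\mL = \mQ_\mR = \mI$ and $\mD_{ij} = \E[\mG_{ij}^2]$, which coincides with a diagonal entry of $\mA$ and therefore lies in $[\lambda_{\min},\lambda_{\max}]$ by Rayleigh. For EShampoo, letting $\vu_i,\vv_j$ denote the columns of $\mQ_\mL,\mQ_\mR$, the vec--Kronecker identity gives $\tilde\mG_{ij} = \vu_i^\transpose \mG \vv_j = (\vv_j \otimes \vu_i)^\transpose \vg$, so $\mD_{ij} = \E[\tilde\mG_{ij}^2]$ is a quadratic form in $\mA$ evaluated at a \emph{unit} Kronecker vector, again in $[\lambda_{\min},\lambda_{\max}]$; Lemma~\ref{lem:bound-kronecker-update-appendix} then produces Eq.~\ref{eq:update-magnitude-bound-appendix}.

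\textbf{Second bound (Shampoo under independence).} Mean-zero entrywise independence of $\mG$ makes $\mA$ diagonal with entries $\E[\mG_{ij}^2]$, so its eigenvalues are exactly these diagonal entries and the intermediate inequality $\lambda_{\min} \leq \E[\mG_{ij}^2] \leq \lambda_{\max}$ follows immediately. Consequently $\mL$ and $\mR$ are also diagonal, with $\mL_{ii} = \sum_{j=1}^n \E[\mG_{ij}^2] \in [n\lambda_{\min}, n\lambda_{\max}]$ and analogously $\mR_{jj} \in [m\lambda_{\min}, m\lambda_{\max}]$. Rewriting the generalized Shampoo update $-\mL^{-p/2} \mG \mR^{-p/2}$ (which matches $\mC^{-p}$ for $\mC = (\mR \otimes \mL)^{1/2}$) in the template of Lemma~\ref{lem:bound-kronecker-update-appendix} gives $\mD_{ij} = \sqrt{\mL_{ii} \mR_{jj}} \in [\sqrt{mn}\,\lambda_{\min},\sqrt{mn}\,\lambda_{\max}]$ with lemma exponent $p$, and applying the lemma yields the dimension-dependent factor $m^{-p/2} n^{-p/2}$ in Eq.~\ref{eq:dimension-dependent-magnitude-appendix}.

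\textbf{Main obstacle.} The only non-routine step is EShampoo, where the vec--Kronecker identity must be invoked to express $\tilde\mG_{ij}$ as a bilinear form against a unit vector, so that the lemma's bounds translate into bounds in the intrinsic quantities $\lambda_{\min}(\mA),\lambda_{\max}(\mA)$ rather than some derived or intermediate quantity. Everything else is bookkeeping: aligning Shampoo's $(\mR \otimes \mL)^{1/2}$ convention with the generic exponent $p$ in the statement, and observing that independence together with $\E[\mG] = \vzero$ is exactly what collapses $\mL$ and $\mR$ to diagonal matrices whose eigenvalues are row and column sums of $\mA$'s diagonal entries.
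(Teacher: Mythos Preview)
Your proposal is correct and follows essentially the same route as the paper: identify $\mD_{i,j}$ in each case as a Rayleigh quotient of $\mA = \E[\vg\vg^\transpose]$ against a unit vector (a standard basis vector for diagonal Adam, a Kronecker-product unit vector $(\vv_j \otimes \vu_i)$ for EShampoo), invoke Courant--Fischer to place it in $[\lambda_{\min},\lambda_{\max}]$, and then apply Lemma~\ref{lem:bound-kronecker-update-appendix}; for Shampoo under independence, both you and the paper observe that $\mL$ and $\mR$ become diagonal with entries that are row/column sums of the diagonal $\mA$, producing the $\sqrt{mn}$ scaling. The only cosmetic difference is that the paper writes the EShampoo unit vector as $(\mQ_\mR \otimes \mQ_\mL)\ve_{i,j}$ rather than $\vv_j \otimes \vu_i$, which is the same object.
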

\begin{proof}    
    Note that \Cref{eq:update-magnitude-bound-appendix} holds for full-matrix Adam since $\|\mU\|_F = \|\vu\|_2 = \|\E[\vg \vg^\transpose]^{-p} \vg\|_2 \in [\lambda_{\max}(\E[\vg \vg^\transpose])^{-p}, \lambda_{\min}(\E[\vg \vg^\transpose])^{-p}] \cdot \|\vg\|_2$, where $\vu = \vectorized(\mU)$ and $\vg = \vectorized(\mG)$. 
    
    For diagonal Adam and eigenvalue-corrected Shampoo, it is sufficient to show that $\lambda_{\min}(\E[\vg \vg^\transpose]) \leq \mD_{i, j} \leq \lambda_{\max}(\E[\vg \vg^\transpose])$ for all $i, j$ and apply Lemma \ref{lem:bound-kronecker-update-appendix}. To see this, note that $\mD_{i, j}$ can be represented by a Rayleigh quotient, i.e., 
    \begin{align*}
        \mD_{i, j} & = \mathbb{E}[\mG_{i, j}^2] = \ve_{i, j}^\transpose \E[\vg \vg^\transpose] \ve_{i, j} & & \mbox{(Adam)} \\
        \mD_{i, j} & = \E[(\mQ_L^\transpose \mG \mQ_R)_{i, j}^2] = \ve_{i, j}^\transpose (\mQ_R \otimes \mQ_L)^\transpose \E[\vg \vg^T] (\mQ_R \otimes \mQ_L) \ve_{i, j} & & \mbox{(EShampoo)}
    \end{align*}
    where $\ve_{i, j} = \vectorized{\mE_{i, j}} \in \R^{m n}$ and $\mE_{i, j} \in \R^{m \times n}$ with
    \begin{equation*}
        (\mE_{i, j})_{k, l} = \begin{cases}
            1 & \mbox{ if } (k, l) = (i, j) \\
            0 & \mbox { otherwise}.
        \end{cases}
    \end{equation*}
    Since $\|\ve_{i, j}\|_2 = \|(\mQ_R \otimes \mQ_L) \ve_{i, j}\|_2 = 1$, the desired bound by the extreme eigenvalues follows from the Courant–Fischer–Weyl min-max theorem.

    To prove \Cref{eq:dimension-dependent-magnitude-appendix}, observe that under independence between components of the gradient $\mG_{i, j}$, the preconditioner for full-matrix Adam $\E[\vg \vg^\transpose]$ is a diagonal matrix whose diagonal entries consist of $\E[\mG_{i, j}^2]$ for all $i, j$. Hence, $\E[\mG_{i, j}^2]$ is bounded by the minimum and maximum eigenvalues, i.e., $\E[\mG_{i, j}^2] \in \big[\lambda_{\min}(\E[\vg \vg^\transpose]), \lambda_{\max}(\E[\vg \vg^\transpose])\big]$. 

    Due to independence, $\E[\mG \mG^\transpose ]$ and $\E[\mG^\transpose \mG]$ are also diagonal. Expanding their diagonal entries gives 
    \begin{align*}
        (\E[\mG \mG^\transpose])_{i, i} & = \sum_{j = 1}^{n} \E[\mG_{i, j}^2] \in n \cdot \big[\lambda_{\min}(\E[\vg \vg^\transpose]), \lambda_{\max}(\E[\vg \vg^\transpose]) \big] &  \forall ~ i = 1, ..., m, \\
        (\E[\mG^\transpose \mG])_{j, j} & = \sum_{i = 1}^{m} \E[\mG_{i, j}^2] \in m \cdot \big[\lambda_{\min}(\E[\vg \vg^\transpose]), \lambda_{\max}(\E[\vg \vg^\transpose]) \big] & \forall ~ j = 1, ..., n.
        \end{align*} 
    Therefore, the Shampoo preconditioner $(\E[\mG \mG^\transpose] \otimes \E[\mG^\transpose \mG])^{1/2}$ is also diagonal with eigenvalues lying in the interval $\big[m^{1/2} n^{1/2} \lambda_{\min}(\E[\vg \vg^\transpose]), m^{1/2} n^{1/2} \lambda_{\max}(\E[\vg \vg^\transpose])\big]$. The desired result follows.
\end{proof}

Note that more general bounds can be derived by relaxing the assumption $\E[\mG] = \vzero$, although the bounds are more complex and are not more conceptually informative.

\section{On the gap between the optimal and practical eigenvalue correction}
\label{sec:optimal-practical-gap}

Eigenvalue-corrected Shampoo shares similarities with memory-efficient optimizers such as GaLore, which apply Adam in a low-dimensional subspace spanned by the largest singular vectors of the gradient matrix \citep{zhao2024galore, su2025galore}. However, GaLore relies on the assumption that the gradient resides in a low-rank subspace that evolves gradually, allowing the same optimizer state to be used even as the subspace is updated. A recent method called LDAdam proposed by \citet{robert2024ldadam} describes a projection-aware method that corrects the scaling matrix through both a projection-aware update and generalized error feedback mechanism when transitioning between subspaces to address this issue. We will see that the naive eigenvalue correction as used in SOAP suffers from similar limitations.

\subsection{Optimal eigenvalue correction in Frobenius norm}
\label{sec:optimal}

Note that we can determine the optimal eigenvalue correction by minimizing its Frobenius norm approximation to full-matrix AdaGrad or Adam:
\begin{proposition}\label{prop:optimal}
    Given symmetric matrix $\mC \in \R^{d \times d}$ and orthogonal matrix $\mQ \in \R^{d \times d}$, the optimal eigenvalue correction $\mD^*$ that minimizes the Frobenius norm distance is given by:
    \begin{equation}
        \mD^* := \diag(\mQ^\transpose \mC \mQ) \in \arg\min_{\mD \in \mathbb{D}^d} || \mC - \mQ \mD \mQ^\transpose ||_\mathrm{F}.\footnote{$\mathbb{D}^{d}$ denotes the set of $d \times d$ diagonal matrices.}
    \end{equation}
    The exact expression for $\mD^*$ depends on the form of accumulation used for computing $\mC$; note that the damping term is omitted here since it does not change the optimal solution. $T$ denotes the current iteration.
    \begin{enumerate}
        \item (Idealized) If $\mC = \E\left[ \vg \vg^\transpose \right]$, then $\mD^* = \E\left[ \diag(\mQ^\transpose \vg)^{\odot 2} \right]$ \citep{george2018ekfac}.
        \item (AdaGrad) If $\mC = \sum_{t=1}^T \vg_t \vg_t^\transpose$, then $\mD_T^* = \sum_{t=1}^T \diag(\mQ_T^\transpose \vg_t)^{\odot 2}$. 
        \item (Adam) If $\mC = (1 - \beta_2) \sum_{t=1}^T \beta_2^{T-t} \vg_t \vg_t^\transpose$, then $\mD_T^* = (1 - \beta_2) \sum_{t=1}^T \beta_2^{T-t} \diag(\mQ_T^\transpose \vg_t)^{\odot 2}$. Note that $\mC$ can be generated recursively via an exponential moving average $\mC_T = \beta_2 \mC_{T-1} + (1 - \beta_2) \vg_T \vg_T^\transpose$ with $\mC_0 = \vzero \in \R^{mn \times mn}$.
    \end{enumerate}
\end{proposition}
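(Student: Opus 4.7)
The plan is to exploit the orthogonal invariance of the Frobenius norm to reduce the minimization to an elementary entrywise projection. Since $\mQ$ is orthogonal,
\begin{equation*}
\| \mC - \mQ \mD \mQ^\transpose \|_F \;=\; \| \mQ^\transpose \mC \mQ - \mD \|_F.
\end{equation*}
Writing $\tilde{\mC} := \mQ^\transpose \mC \mQ$, the squared objective decomposes as $\sum_{i \neq j} \tilde{\mC}_{ij}^2 + \sum_{i}(\tilde{\mC}_{ii} - \mD_{ii})^2$, where the first term does not depend on $\mD$. Minimizing over diagonal $\mD$ is therefore solved coordinatewise by $\mD_{ii}^* = \tilde{\mC}_{ii}$, giving the closed form $\mD^* = \diag(\mQ^\transpose \mC \mQ)$. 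This is nothing more than the orthogonal projection of $\tilde{\mC}$ onto the subspace of diagonal matrices, and is attained uniquely.

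For the three specific expressions, my plan is to commute $\diag(\mQ^\transpose \cdot \mQ)$ past the linear accumulation in each definition of $\mC$, and then apply the rank-one identity
\begin{equation*}
\diag\!\bigl(\mQ^\transpose \vg \vg^\transpose \mQ\bigr) \;=\; \diag\!\bigl((\mQ^\transpose \vg)(\mQ^\transpose \vg)^\transpose\bigr) \;=\; \diag(\mQ^\transpose \vg)^{\odot 2},
\end{equation*}
i.e., that the diagonal of an outer product $\vu \vu^\transpose$ is the vector of entrywise squares $\vu^{\odot 2}$. Pulling $\E[\cdot]$ through $\diag$ covers the idealized case; pulling the finite sum $\sum_{t=1}^T$ through $\diag$ covers AdaGrad (using $\mQ = \mQ_T$); and pulling the exponentially-weighted sum $(1-\beta_2)\sum_{t=1}^T \beta_2^{T-t}$ through $\diag$ covers Adam. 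The recursive form of $\mC_T$ for Adam then follows from unrolling the exponential moving average.

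There is no real obstacle in this argument: the core claim is a standard projection lemma onto the diagonal subspace, and the case-by-case expressions are immediate by linearity. The only minor care needed is notational, to distinguish $\diag(\cdot)$ acting on a matrix (extracting its diagonal, returned as a diagonal matrix) from $\diag(\cdot)$ acting on a vector (embedding it as a diagonal matrix), and to record that the damping term mentioned in the statement shifts the objective by a term independent of $\mD$ and hence does not affect the minimizer.
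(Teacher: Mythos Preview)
Your proposal is correct and follows essentially the same approach as the paper's own (informal) proof: reduce via orthogonal invariance of the Frobenius norm to $\|\mQ^\transpose \mC \mQ - \mD\|_F$, read off $\mD^* = \diag(\mQ^\transpose \mC \mQ)$ as the diagonal projection, and then pass $\diag(\mQ^\transpose \cdot \mQ)$ through the expectation or (weighted) sum for the three cases. You simply spell out the intermediate steps (the entrywise decomposition and the rank-one identity) that the paper leaves implicit.
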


\begin{proof} (Informal.)
    Since $\mQ$ is orthogonal, $\|\mC - \mQ \mD \mQ^\transpose\|_F = \|\mQ^\transpose \mC \mQ - \mD\|_F$, with $\mD$ diagonal. Therefore, the optimal solution has the form $\mD^* = \diag(\mQ^\transpose \mC \mQ)$. Each case then follows by observing that $\mC$ is the expectation or weighted sum of $\vg \vg^\transpose$, and passing $\mQ$ into the sum or expectation.
\end{proof}

While $\mD^*$ is optimal in this sense, it does not guarantee anything regarding the similarity of the (root) inverse of the approximation.
Using \Cref{prop:optimal}, we can establish the following corollary.
\begin{corollary}
    The optimal eigenvalue correction yields a tighter Frobenius norm approximation than Shampoo and CASPR, i.e., 
    \begin{equation*}
        || \mC - \mQ \mD^* \mQ^\transpose ||_\mathrm{F} \leq \min \{ || \mC - \mC^\mathrm{Shampoo} ||_\mathrm{F}, || \mC - \mC^\mathrm{CASPR} ||_\mathrm{F} \},
    \end{equation*}
    where $\mC^\mathrm{CASPR}$ is the preconditioner used by the CASPR algorithm \citep{duvvuri2024caspr}.
\end{corollary}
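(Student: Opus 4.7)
The plan is to exploit the optimality statement in Proposition~\ref{prop:optimal}: by construction, $\mQ \mD^* \mQ^\transpose$ is the best Frobenius-norm approximation to $\mC$ within the family $\{ \mQ \mD \mQ^\transpose : \mD \in \mathbb{D}^{mn} \}$, where $\mQ := \mQ_\mR \otimes \mQ_\mL$ is the Kronecker-product eigenbasis shared by $\mL$ and $\mR$. So the corollary reduces to verifying that both $\mC^\mathrm{Shampoo}$ and $\mC^\mathrm{CASPR}$ lie in this family; once that is established, substituting the corresponding diagonals into the inequality from Proposition~\ref{prop:optimal} and taking the minimum over the two choices gives the claim.

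For Shampoo, starting from $\mL = \mQ_\mL \mLambda_\mL \mQ_\mL^\transpose$ and $\mR = \mQ_\mR \mLambda_\mR \mQ_\mR^\transpose$, and using the mixed-product identity together with the fact that matrix roots respect the Kronecker product on commuting positive semi-definite factors, we get
\begin{equation*}
\mC^\mathrm{Shampoo} = (\mR \otimes \mL)^{1/2} = \mQ\, (\mLambda_\mR \otimes \mLambda_\mL)^{1/2}\, \mQ^\transpose,
\end{equation*}
so $\mD^\mathrm{Shampoo} := (\mLambda_\mR \otimes \mLambda_\mL)^{1/2}$ is a valid diagonal witness. For CASPR, the preconditioner is constructed from matrix powers of $\mL + \epsilon \mI$ and $\mR + \epsilon \mI$, their Kronecker products, and linear combinations thereof; each such operand is diagonalized by $\mQ$, so their sums and Kronecker products remain in the $\mQ$-diagonal family. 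Hence $\mC^\mathrm{CASPR} = \mQ\, \mD^\mathrm{CASPR}\, \mQ^\transpose$ for some diagonal $\mD^\mathrm{CASPR}$ whose entries are scalar combinations of pairs of eigenvalues from $\mLambda_\mL$ and $\mLambda_\mR$.

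The main obstacle is the structural check for CASPR: one must unpack its exact definition from~\citet{duvvuri2024caspr} and confirm it only involves operations preserving the common eigenbasis, ruling out any asymmetric symmetrization or one-sided damping that could break joint diagonalization. Once that is verified, Proposition~\ref{prop:optimal} applied at $\mD = \mD^\mathrm{Shampoo}$ and then $\mD = \mD^\mathrm{CASPR}$ yields $\|\mC - \mQ \mD^* \mQ^\transpose\|_F \leq \|\mC - \mC^\mathrm{Shampoo}\|_F$ and $\|\mC - \mQ \mD^* \mQ^\transpose\|_F \leq \|\mC - \mC^\mathrm{CASPR}\|_F$, and taking the minimum finishes the proof.
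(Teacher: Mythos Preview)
Your proposal is correct and follows essentially the same approach as the paper: show that both $\mC^\mathrm{Shampoo}$ and $\mC^\mathrm{CASPR}$ lie in the family $\{\mQ \mD \mQ^\transpose : \mD \in \mathbb{D}^{mn}\}$ with $\mQ = \mQ_\mR \otimes \mQ_\mL$, then invoke the optimality of $\mD^*$ from Proposition~\ref{prop:optimal}. The paper shortcuts your structural CASPR verification by directly citing Lemma~3.4 of \citet{duvvuri2024caspr}, which states that $\mC^\mathrm{Shampoo}$ and $\mC^\mathrm{CASPR}$ share the same eigenvectors, so there is no need to unpack the construction.
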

\begin{proof}(Informal.)
    The first inequality trivially follows from \Cref{prop:optimal} since by definition the eigenvectors of Shampoo are equal to $\mQ = (\mQ_{\mR} \otimes \mQ_{\mL})^\transpose$. The second inequality follows from \Cref{prop:optimal} together with Lemma 3.4 in \citet{duvvuri2024caspr}, which states that the eigenvectors of $\mC^\mathrm{Shampoo}$ and $\mC^\mathrm{CASPR}$ are identical.
\end{proof}

This means that using the optimal eigenvalue correction (with respect to Frobenius norm approximation) yields a tighter approximation to the full-matrix quantity compared to Shampoo or CASPR. 
A remaining open theoretical question is whether a tighter Frobenius norm approximation can yield a tighter regret bound compared to Shampoo and CASPR. For example, one can obtain a tighter regret bound than Shampoo by only considering one-sided Shampoo, which is not generally a better approximation to full-matrix AdaGrad \citep{xie2025structured,an2025asgo}.

\subsection{Basis-aware eigenvalue correction}
\label{sec:basis-aware-derivation}

Despite its optimality, computing $\mD^*$ for cases 2 and 3 in \Cref{prop:optimal} is not feasible in practice since we would have to store and transform prior gradients from \textit{all previous iterations} with $\mQ_T$, or have access to the full-matrix quantity $\mC_T$.
For simplicity, we will focus on case 3, although the results generalize without loss of generality.\footnote{To address case 2, simply drop $1 - \beta_2$ and $\beta_2^{T-t}$.}
In contrast to the optimal eigenvalue correction, which uses a fixed basis matrix $\mQ_T$ based on the most recent statistics $\mL_T$ and $\mR_T$, SOAP uses potentially different basis matrices $\mQ_t$ based on previous statistics $\mL_t$ and $\mR_t$ available at every step $t = 1, ..., T$:
\begin{equation}
\label{eq:change-coordinate-system-naive}
    \mD^* = (1 - \beta_2) \sum_{t=1}^T \beta_2^{T-t} \diag\left( \mQ_T^\transpose \vg_t \right)^{\odot 2} \approx (1 - \beta_2) \sum_{t=1}^T \beta_2^{T-t} \diag\left( \mQ_{\textcolor{red}{t}}^\transpose \vg_t \right)^{\odot 2} =: \hat{\mD}_T.
\end{equation}
Intuitively, this means that the eigenvalue correction is  accumulated inconsistently \textit{across different coordinate systems}. This can lead to a mismatch when preconditioning $\mG_T$, which is only transformed to the \textit{current} coordinate system determined by $\mQ_T$.
When the basis remains approximately constant, \ie, $\mQ_T \approx \mQ_t$ for all $t$, \Cref{eq:change-coordinate-system-naive} can be a tight approximation and $\mD^* \approx \mD_T$.

The naive eigenvalue correction may be approximately correct when the basis is updated infrequently, but this can be a poor approximation if $\mQ$ does not approximate the changing eigenbasis of $\mC$, thereby increasing the approximation error.
For case 3, the approximation becomes potentially milder because the terms in the sum in \Cref{eq:change-coordinate-system-naive} are down-weighted by $\beta_2^{T-t}$ through the exponential moving average. Therefore, depending on $\beta_2$, the contribution from the eigenvalue correction statistic in previous coordinate systems might be negligible. However, this is not the case for case 2, where all terms are weighted equally.

\begin{figure}
    \centering
    \includegraphics[width=\textwidth]{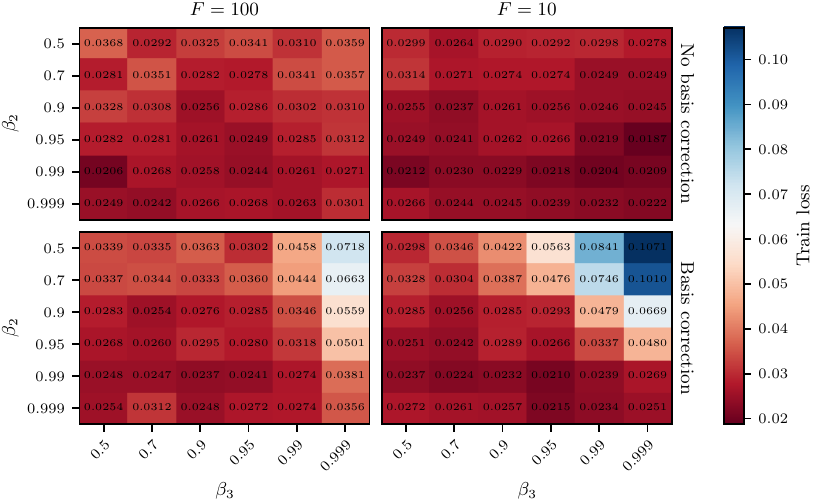}
    \caption{Decoupling the exponential moving average for the eigenbasis ($\beta_2$) and eigenvalues ($\beta_3$) for different eigendecomposition frequencies ($F$). We can observe a remarkable invariance to the choice of $\beta_2$ and $\beta_3$. Interestingly, the correction for the change of bases seems to \textit{hurt} performance overall and especially for low $\beta_2$, high $\beta_3$, and low $F$ values -- a pattern that we might expect \textit{without} the correction.}
    \label{fig:beta2-beta3}
\end{figure}

In order to address the theory-practice gap between the naive eigenvalue correction (used in SOAP and EShampoo) and the optimal correction in Frobenius norm, we first observe that
\begin{equation}
\begin{split}
    \mD^* & = (1 - \beta_2) \sum_{t=1}^T \beta_2^{T-t} \diag\left( \mQ^\transpose \vg_t \right)^{\odot 2} \\
    & = (1 - \beta_2) \sum_{t=1}^T \beta_2^{T-t} \diag\left( \mQ_T^\transpose \vg_t \vg_t^\transpose \mQ_T \right) \\
    & = \diag\left( \mQ_T^\transpose \big( (1 - \beta_2) \sum_{t=1}^T \beta_2^{T-t} \vg_t \vg_t^\transpose \big) \mQ_T \right) \\
    & = \diag\left( \mQ_T^\transpose \mC_T \mQ_T \right) \\
    &=: \diag\left( \hat{\mC}_T \right).
\end{split}
\end{equation}
Since $\mQ_T$ is orthogonal, we can recursively write
\begin{equation}
\begin{split}
    \mD^* = \diag\left( \hat{\mC}_T \right) &= \diag\left( \mQ_T^\transpose \big( \beta_2 \mC_{T-1} + (1 - \beta_2) \vg_T \vg_T^\transpose \big) \mQ_T \right) \\
    &= \diag\left( \beta_2 \mQ_T^\transpose \mC_{T-1} \mQ_T + (1 - \beta_2) \mQ_T^\transpose \vg_t \vg_t^\transpose \mQ_T \right) \\
    &= \diag\left( \beta_2 \mQ_T^\transpose \mQ_{T-1} \hat{\mC}_{T - 1}  \mQ_{T-1}^\transpose \mQ_T + (1 - \beta_2) \mQ_T^\transpose \vg_T \vg_T^\transpose \mQ_T \right) \\
    &= \beta_2 \diag\left( \mR_{T, T-1} \hat{\mC}_{T - 1}  \mR_{T, T-1}^\transpose \right) + (1 - \beta_2) \diag\left( \mQ_T^\transpose \vg_T \right)^{\odot 2},
\end{split}
\end{equation}
where $\mR_{T, T-1} := \mQ_T^\transpose \mQ_{T-1}$ is the transition matrix between different bases.

While this is the exact expression for our desired quantity, it requires keeping track of the full matrix $\hat{\mC}_{T-1}$ to compute $\diag(\mR_{T, T-1} \hat{\mC}_{T - 1} \mR_{T, T-1}^\transpose )$, which is not tractable.
Since we explicitly construct $\mQ_{T-1}$ to be close to the best Kronecker-factored basis for $\mC_{T-1}$ through the choice of the Shampoo preconditioner with $\mL_{T-1}$ and $\mR_{T-1}$, we make the additional assumption that $\hat{\mC}_{T - 1}$ is approximately diagonal, \ie, $\hat{\mC}_{T - 1} \approx  \diag(\hat{\mC}_{T - 1}) = \diag(\vv_{T-1}^\textrm{corrected})$.

Substituting this back into the recursive equation above, we have that
\begin{equation}
\begin{split}
    \mD^* = \diag\left( \hat{\mC}_T \right) &\approx \beta_2 \diag\left( \mR_{T, T-1} \diag\left(\vv_{T-1}^\textrm{corrected}\right) \mR_{T, T-1}^\transpose \right) + (1 - \beta_2) \diag\left( \mQ_T^\transpose \vg_T \right)^{\odot 2} \\
    &= \beta_2 \diag\left( \mR_{T, T-1}^{\odot 2} \vv_{T-1}^\textrm{corrected} \right) + (1 - \beta_2) \diag\left( \mQ_T^\transpose \vg_T \right)^{\odot 2} \\
    &= \diag\left( \vv_T^\textrm{corrected} \right).
\end{split}
\end{equation}
This gives a recursive update rule that, in contrast to the naive solution, accounts for the changes of bases between iterations.
This also recovers the exact solution when $\mQ_T = \mQ_t$ for all $t$ since $\mR_{t, t-1} = \mathbf{I}$.
This correction for the changing basis has a similar motivation to and resembles parts of the LDAdam algorithm \citep{robert2024ldadam}.

We design an experiment to empirically test whether the implicit approximation in \Cref{eq:change-coordinate-system-naive} helps in practice, and find that interestingly, the correction appears to hurt performance in the settings where we would expect it to help, see \Cref{fig:beta2-beta3}.
A satisfying explanation of this phenomenon remains an open question.

\section{Additional experimental details and results}
\label{app:additional-experiments}

For all experiments we used $1\times$ NVIDIA A100 80GB GPU per run, with the exception of the ImageNet ViT experiments, for which we used $4\times$ NVIDIA A100 80GB GPUs per run.
All of the experiments were conducted on an internal compute cluster and we estimate that the total required compute was around $1440$ GPU hours or $60$ days.
We ran additional exploratory experiments not reported in this paper which increases the total compute cost of the project.
The implementation of EShampoo and all other Shampoo variants considered here including \Cref{alg:adaptive-qr-iteration} and \Cref{eq:criterion} for \texttt{eigh} is available at \url{https://github.com/facebookresearch/optimizers}.

\subsection{Imagewoof experimental details}
\label{sec:imagewoof-details}

\begin{figure}[t]
    \centering
    \includegraphics[width=\linewidth]{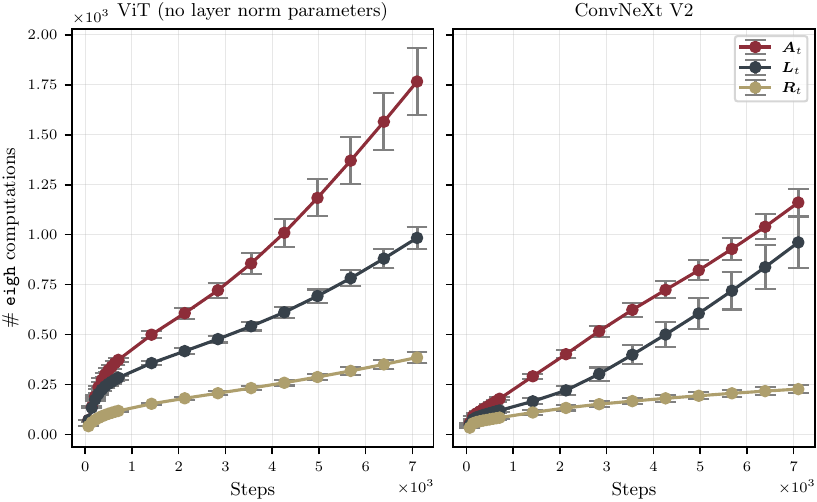}
    \caption{Same setting as in \Cref{fig:adaptivity-patterns}. (\textbf{left}) When removing all learnable layer norm parameters from the ViT, the number of eigendecompositions for $\mL_t$ and $\mR_t$ remain approximately unchanged and no compensation appears to be happening. (\textbf{right}) With a ConvNeXt V2 architecture instead of the ViT, the overall pattern is similar, but with a more pronounced discrepancy between $\mL_t$ and $\mR_t$.}
    \label{fig:more_patterns}
\end{figure}

We use the Imagewoof dataset.\footnote{The Imagewoof dataset is available at \url{https://github.com/fastai/imagenette}.}
All models are trained with cross entropy loss for $100$ epochs, using a learning rate schedule consisting of a linear warmup for 353 steps followed by cosine decay. We use a batch size of $128$, randomized cropping and horizontal flips as data augmentation, and the default settings for $\beta_1 = 0.9$ and $\beta_2 = 0.999$.
For EShampoo in \Cref{fig:adaptive-eigenbasis}, \Cref{fig:adaptivity-patterns}, and \Cref{fig:error_impact} we use the learning rate $\alpha = 6 \cdot 10^{-4}$ and $\epsilon = 10^{-10}$.

For the vision transformer (ViT) model, we use SimpleViT \citep{beyer2022better} with patch size $16$, $6$ heads, a depth of $12$ layers, an MLP dimension of $1536$, dimension of $384$, gradient clipping with threshold $1$, and weight decay of $10^{-4}$.
For the ConvNeXt V2 architecture \citep{woo2023convnextv2} we use weight decay of $0.05$ and drop paths with rate $0.1$.

For \Cref{fig:lr-transfer}, we fix $\epsilon = 10^{-10}$ and sweep the following learning rates $\alpha$:\footnote{We fixed $\epsilon$ based on a wide sweep over $\alpha$ and $\epsilon$ for AdamW. We also use this $\epsilon$ when grafting from Adam.}
\begin{itemize}
    \item Vision transformer
    \begin{itemize}
        \item AdamW: $\alpha \in \{10^{-4}, 3 \cdot 10^{-4}, 6 \cdot 10^{-4}, 10^{-3}, 3 \cdot 10^{-3}, 6 \cdot 10^{-3}, 10^{-2} \}$
       \item Shampoo with grafting: $\alpha \in \{ 10^{-4}, 3 \cdot 10^{-4}, 10^{-3}, 3 \cdot 10^{-3} \}$
        \item Shampoo: $\alpha \in \{3 \cdot 10^{-3}, 6 \cdot 10^{-3}, 10^{-2}, 3 \cdot 10^{-3} \}$
        \item Shampoo$^2$ with trace scaling: $\alpha \in \{ 10^{-4}, 3 \cdot 10^{-4}, 6 \cdot 10^{-4}, 3 \cdot 10^{-3} \}$
        \item EShampoo: $\alpha \in \{ 10^{-4}, 3 \cdot 10^{-4}, 10^{-3}, 3 \cdot 10^{-3} \}$
    \end{itemize}
    \item ConvNeXt V2
    \begin{itemize}
        \item AdamW: $\alpha \in \{10^{-4}, 3 \cdot 10^{-4}, 6 \cdot 10^{-4}, 10^{-3}, 3 \cdot 10^{-3}, 6 \cdot 10^{-3}, 10^{-2} \}$
        \item Shampoo with grafting: $\alpha \in \{10^{-4}, 3 \cdot 10^{-4}, 10^{-3}, 3 \cdot 10^{-3}, 10^{-2} \}$
        \item Shampoo: $\alpha \in \{ 10^{-4}, 3 \cdot 10^{-4}, 10^{-3}, 3 \cdot 10^{-3}\}$
        \item Shampoo$^2$ with trace scaling: $\alpha \in \{ 10^{-4}, 3 \cdot 10^{-4}, 10^{-3} \}$
        \item EShampoo: $\alpha \in \{10^{-4}, 3 \cdot 10^{-4}, 10^{-3}, 3 \cdot 10^{-3}, 10^{-2} \}$ 
    \end{itemize}
\end{itemize}

\subsection{More patterns of adaptivity}

To test the generality of the trends described in \Cref{sec:patterns}, we perform the same experiment under a few additional settings. 

In \Cref{fig:more_patterns}, we remove all learnable layer norm parameters from the vision transformer and consider the ConvNeXt V2 model trained on the same dataset (Imagewoof).
The overall pattern is consistent across all architectures.
When removing the learnable layer norm parameters from the vision transformer, the number of eigendecompositions for $\mL_t$ and $\mR_t$ stays roughly constant (compare \Cref{fig:adaptivity-patterns} (right) with \Cref{fig:more_patterns} (left)).
For the ConvNeXt V2 architecture, the discrepancy between the required updates for $\mL_t$ and $\mR_t$ is even more pronounced: on average,
the eigenbases of $\mL_t$ has to be updated significantly more frequently than for $\mR_t$.

\begin{figure}[t]
    \centering
    \includegraphics[width=\linewidth]{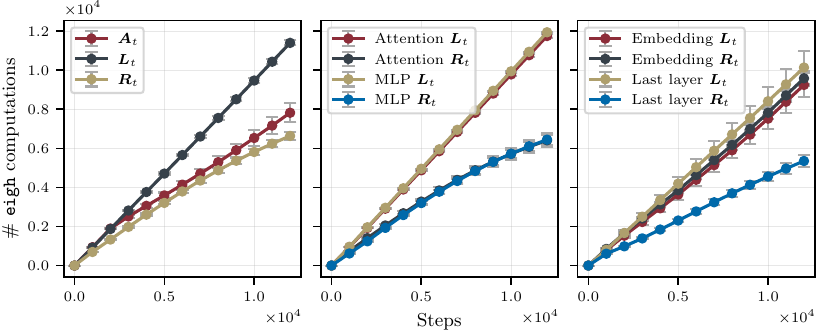}
    \caption{Llama 3 (324M) trained on 3.2B tokens of C4 data with EShampoo ($F=1, \tau=0.01$). The eigenbases for $\mL_t$ are also in this setting consistently updated more frequently than for $\mR_t$, with the exception of the embedding. The eigenbases of $\mA_t$ are less frequently updated than of $\mL_t$, in contrast to the Imagewoof experiments. There appears to be no difference in the pattern for weight matrices within the attention mechanisms and the MLPs.}
    \label{fig:llama_patterns}
\end{figure}

To expand beyond the vision modality and cover class imbalance, we also train a Llama 3 model with 324 million parameters on 3.2 billion tokens of the C4 dataset and conduct a similar analysis \citep{raffel2023exploring,grattafiori2024llama3}.
We use EShampoo with $F=1$ and $\tau=0.01$.

In \Cref{fig:llama_patterns} (left), we compute mean and standard errors across all (single) Kronecker factors $\mA_t$ for RMS normalization parameters and all Kronecker factors $\mL_t$ and $\mR_t$ for linear layers and embedding blocks at every iteration. There are no bias parameters in this particular model.
Consistent with the experiments in the vision setting, $\mL_t$ is updated more frequently than $\mR_t$, with $\mL_t$ updated at almost every iteration, whereas $\mR_t$ updated every other iteration.
Unlike the vision setting, the Kronecker factors for the normalization layers $\mA_t$ are initially updated at a similar frequency to $\mL_t$, but diminishes after the first ~25\% of iterations until it is closer to, but still higher than, the update frequency for $\mR_t$.
The standard error for $\mA_t$ is also larger than for $\mL_t$ and $\mR_t$.

In \Cref{fig:llama_patterns} (middle), we consider two subsets of the hidden layers: the four weight matrices in the attention mechanism in each transformer block and the three weight matrices in the MLP in each transformer block.
We compute the statistics across all weight matrices for each subset of the hidden weight matrices.
There appears to be no significant difference in the number of eigendecompositions across steps between the two subsets of the weights.

In \Cref{fig:llama_patterns} (right), we consider the input embedding layer and the last output layer.
Due to the large vocabulary size, the gradients for these two layers are blocked such that no block has a dimension larger than $8192$.
Then we precondition each gradient block as usual with $\mL_t$ and $\mR_t$.
Here, we compute the statistics across all of these blocks.
The trend for the last layer is consistent with all other linear layers in our experiments.
However, the eigendecomposition frequency of $\mL_t$ and $\mR_t$ for embeddings are almost identical.

\subsection{AlgoPerf workloads}

We follow the standard AlgoPerf setup and consider wall-clock time to pre-specified validation metric targets.
See \citet{dahl2023benchmarking} and \citet{kasimbeg2025accelerating} for more details on the AlgoPerf benchmark.\footnote{The benchmark code is available at \url{https://github.com/mlcommons/algorithmic-efficiency/}.}
The FastMRI dataset can be attributed to \citet{knoll2020fastmri,zbontar2019fastmri}, the ImageNet dataset to \citet{krizhevsky2012imagenet}, and the OGBG dataset to \citet{hu2021opengraphbenchmark}.

We choose this specific subset of the workloads because (1) we want to include a larger scale vision transformer (ImageNet ViT), an architecture we use in the small-scale experiments, (2) the FastMRI and OGBG workloads share the same hyperparameter settings with the Imagenet ViT workload, hence excluding them as a confounding factor, and (3) the $\beta_2$ used for these workloads is the smallest among all hyperparameter settings of the winning Shampoo submission, resulting in the fastest moving average and thereby potentially faster changing eigenbases.

We run the winning Shampoo submission with Adam grafting, $F=100$, and the best hyperparameter setting for each workload.\footnote{The submission is available at \url{https://github.com/mlcommons/submissions_algorithms/tree/main/previous_leaderboards/algoperf_v05/submissions/external_tuning/shampoo_submission}.}
For EShampoo, we use the same best-performing hyperparameter setting from Shampoo but turn off learning rate grafting from Adam, and modify $F$ and $\tau$.

\begin{table}
  \caption{Results for all considered settings on a subset of the AlgoPerf workloads. We show the mean and standard error of the steps and time to the targets across the runs that actually hit the targets.}
  \label{tab:full-algoperf-results}
  \centering
  \begin{tabular}{cllll}
    \toprule
    Workload & Shampoo Variant & Hits Targets & Steps & Time [min] \\
    \midrule
    \multirow{4}{4em}{FastMRI} & Adam grafting ($F=100$) & $4/5$ & $4301 \pm 109$ & $13.96 \pm 0.44$ \\
    & $\mC^\mathrm{EShampoo}$ ($F=100$) & $5/5$ & $2536 \pm 66$ & $10.44 \pm 0.21$ \\
    & $\mC^\mathrm{EShampoo}$ ($F=50$) & $5/5$ & $2578 \pm 86$ & $10.86 \pm 0.27$ \\
    & $\mC^\mathrm{EShampoo}$ ($F=10$) & $5/5$ & $2311 \pm 73$ & $14.93 \pm 1.97$ \\
    & $\mC^\mathrm{EShampoo}$ ($F=1$) & $5/5$ & $2101 \pm 31$ & $35.34 \pm 0.70$ \\
    & $\mC^\mathrm{EShampoo}$ ($\tau=0.1$, $F=100$) & $5/5$ & $2553 \pm 154$ & $16.33 \pm 2.10$\tablefootnote{This wall-clock time statistic seems to be negatively impacted by either an issue with the AlgoPerf code or the hardware setup.} \\
    & $\mC^\mathrm{EShampoo}$ ($\tau=0.1$, $F=50$) & $5/5$ & $2468 \pm 145$ & $10.81 \pm 0.72$ \\
    & $\mC^\mathrm{EShampoo}$ ($\tau=0.1$, $F=10$) & $5/5$ & $2420 \pm 92$ & $10.76 \pm 0.73$ \\
    & $\mC^\mathrm{EShampoo}$ ($\tau=0.1$, $F=1$) & $5/5$ & $2367 \pm 96$ & $10.93 \pm 0.42$ \\
    & $\mC^\mathrm{EShampoo}$ ($\tau=0.01$, $F=1$) & $5/5$ & $2208 \pm 41$ & $27.43 \pm 0.49$ \\
    \midrule
    \multirow{5}{4em}{ImageNet ViT} & Adam grafting ($F=100$) & $1/1$ & $79907$ & $894.27$ \\
    & $\mC^\mathrm{EShampoo}$ ($F=100$) & $1/1$ & $76226$ & $894.85$ \\
    & $\mC^\mathrm{EShampoo}$ ($F=10$) & $1/1$ & $73237$ & $1160.53$ \\
    & $\mC^\mathrm{EShampoo}$ ($\tau=0.1$, $F=100$) & $1/1$ & $74010$ & $852.66$ \\
    & $\mC^\mathrm{EShampoo}$ ($\tau=0.1$, $F=50$) & $1/1$ & $77459$ & $935.89$ \\
    & $\mC^\mathrm{EShampoo}$ ($\tau=0.01$, $F=10$) & $1/1$ & $75841$ & $1188.76$ \\
    \midrule
    \multirow{4}{4em}{OGBG} & Adam grafting ($F=100$) & $2/5$ & $12574 \pm 708$ & $39.20 \pm 1.88$ \\
    & $\mC^\mathrm{EShampoo}$ ($F=100$) & $3/5$ & $8320 \pm 1203$ & $33.02 \pm 4.05$ \\
    & $\mC^\mathrm{EShampoo}$ ($F=50$) & $5/5$ & $7173 \pm 443$ & $26.17 \pm 1.31$ \\
    & $\mC^\mathrm{EShampoo}$ ($F=10$) & $3/5$ & $6645 \pm 357$ & $37.55 \pm 1.74$ \\
    & $\mC^\mathrm{EShampoo}$ ($F=1$)\tablefootnote{Runs failed due to \url{https://github.com/mlcommons/algorithmic-efficiency/issues/866}.} & $-$ & $-$ & $-$ \\
    & $\mC^\mathrm{EShampoo}$ ($\tau=0.1$, $F=100$) & $4/5$ & $8047 \pm 369$ & $27.60 \pm 1.15$ \\
    & $\mC^\mathrm{EShampoo}$ ($\tau=0.1$, $F=50$) & $5/5$ & $7117 \pm 328$ & $27.55 \pm 3.49$ \\
    & $\mC^\mathrm{EShampoo}$ ($\tau=0.1$, $F=10$) & $5/5$ & $7151 \pm 416$ & $29.11 \pm 1.98$ \\
    & $\mC^\mathrm{EShampoo}$ ($\tau=0.1$, $F=1$) & $5/5$ & $6758 \pm 273$ & $34.16 \pm 1.65$ \\
    & $\mC^\mathrm{EShampoo}$ ($\tau=0.01$, $F=10$) & $2/5$ & $7234 \pm 361$ & $39.15 \pm 2.01$ \\
    \bottomrule
  \end{tabular}
\end{table}

\begin{figure}
    \label{fig:fastmri}
    \centering
    \includegraphics[width=\textwidth]{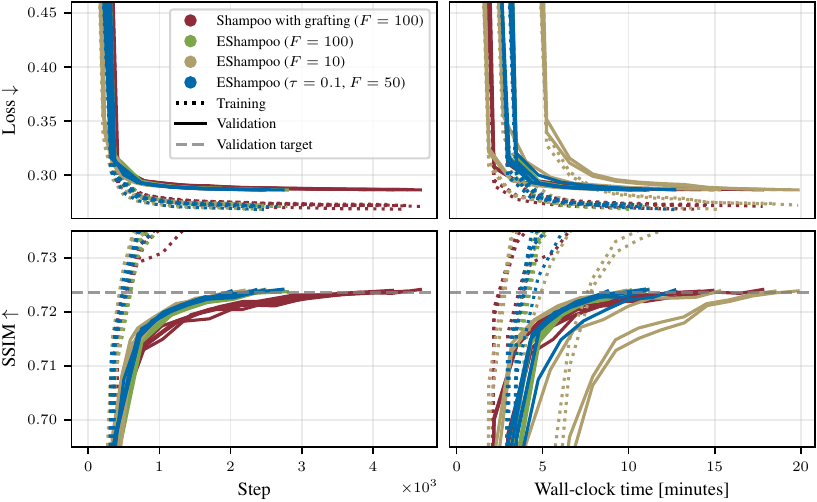}
    \caption{FastMRI AlgoPerf workload.}
\end{figure}

\begin{figure}
    \label{fig:imagenet-vit}
    \centering
    \includegraphics[width=\textwidth]{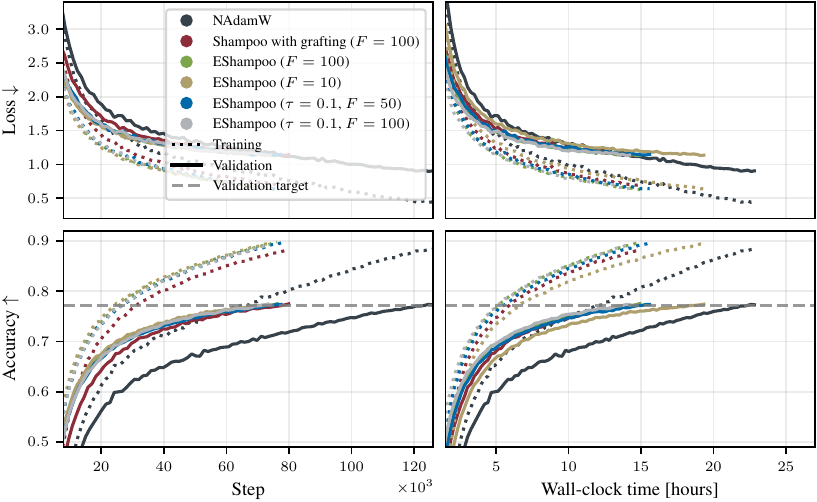}
    \caption{ImageNet ViT AlgoPerf workload.}
\end{figure}

\begin{figure}
    \label{fig:ogbg}
    \centering
    \includegraphics[width=\textwidth]{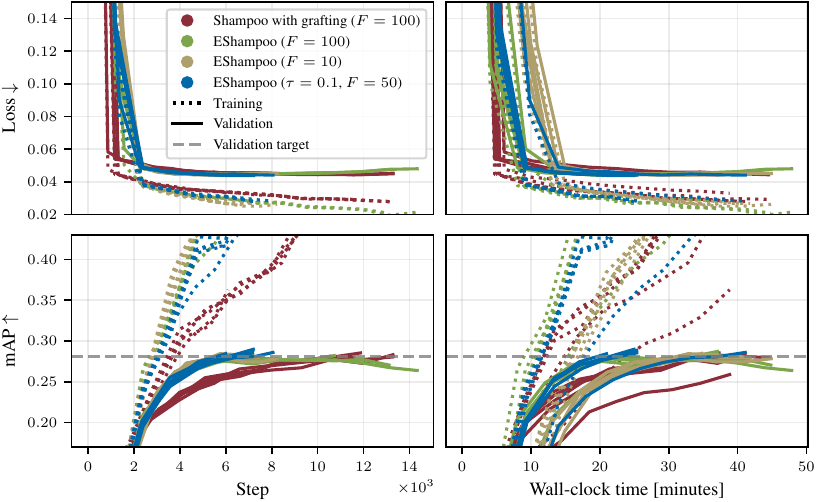}
    \caption{OGBG AlgoPerf workload.}
\end{figure}

\end{document}